\providecommand{\norm}[1]{\lVert#1\rVert}
\DeclarePairedDelimiter{\abs}{\lvert}{\rvert}			
\newcommand{\inv}{^{-1}}
\newcommand{\lam}{{\lambda} }
\newcommand{\half}{\frac{1}{2}}
\newcommand{\eps}{\epsilon}
\newcommand{\e}{\mathbf{e}}
\newcommand{\s}{\mathbf{s}}
\newcommand{\bv}{\mathbf{v}}
\newcommand{\w}{\mathbf{w}}
\newcommand{\x}{\mathbf{x}}
\newcommand{\y}{\mathbf{y}}
\newcommand{\A}{\mathbf{A}}
\newcommand{\0}{\mathbf{0}}
\newcommand{\calN}{\mathcal{N}}
\newcommand{\RR}{\mathbb{R}}
\newcommand{\ZZ}{\mathbb{Z}}
\newcommand{\opt}{^{\ast}}
\newcommand{\tr}{^{T}}					
\newcommand{\reg}{\phi}					
\newcommand{\iter}[1]{^{(#1)}}
\DeclareMathOperator{\std}{std}
\newcommand{\prl}{\RR_+^{\ast}}			
\newcommand{\nnrl}{\RR_+}			
\DeclareMathOperator{\tf}{\theta}			
\newcommand{\slog}{_{\mathrm{log}}}
\newcommand{\satan}{_{\mathrm{atan}}}
\newcommand{\srat}{_{\mathrm{rat}}}
\newcommand{\NZ}{S}		
\newcommand{\RNZ}{\RR\!\setminus\!\{0\}}
\DeclareMathOperator{\ogs}{ogs}
\DeclareMathOperator{\STFT}{STFT}
\theoremstyle{definition}
\newtheorem{prop}{Proposition}
\newtheorem{cor}{Corollary}
\newtheorem{thm}{Theorem}
\newtheorem{lemma}{Lemma}
\renewcommand{\leq}{\leqslant}
\renewcommand{\geq}{\geqslant}
\title{Group-Sparse Signal Denoising: Non-Convex Regularization, Convex Optimization}
\author{Po-Yu Chen and Ivan W. Selesnick
\thanks{The authors are with the Department of Electrical and Computer Engineering, 
Polytechnic Institute of New York University, 6 Metrotech Center, Brooklyn, NY 11201.
Email: poyupaulchen@gmail.com, selesi@poly.edu, phone: 718 260-3416, fax: 718 260-3906.}%
\thanks{This research was support by the NSF under Grant No. CCF-1018020.}%
}
\begin{document}
\maketitle

\begin{abstract}

Convex optimization with sparsity-promoting convex regularization is a standard approach for estimating sparse signals in noise.
In order to promote sparsity more strongly than convex regularization, it is also standard practice to employ non-convex optimization.
In this paper, we take a third approach.
We utilize a non-convex regularization term chosen such that the
total cost function (consisting of data consistency and regularization terms) is convex. 
Therefore, sparsity is more strongly promoted than in the standard convex formulation, 
but without sacrificing the attractive aspects of convex optimization (unique minimum, robust algorithms, etc.).
We use this idea to improve the recently developed `overlapping group shrinkage' (OGS) algorithm
for the denoising of group-sparse signals.
The algorithm is applied to the problem of speech enhancement with favorable results in
terms of both SNR and perceptual quality.

\end{abstract}

\section{Introduction}

In this work, we address the problem of estimating a vector $\x$ from an observation $\y$,
\begin{equation}
	y(i) = x(i) + w(i),
	\ i \in \ZZ_N = \{0, \dots, N-1\},
\end{equation}
where $\w$ is additive white Gaussian noise (AWGN).
We assume that $\x$ is a group-sparse vector.
By group-sparse, we mean that large magnitude values of $\x$ tend not to be isolated.
Rather, large magnitude values tend to form clusters (groups).
Furthermore, we do not assume that the group locations are known, nor that the group boundaries are known.
In fact, we do not assume that the groups have well defined boundaries.
An example of such a vector (in 2D) is the spectrogram of a speech waveform.
The spectrogram of a speech waveform exhibits areas and ridges of large magnitude, but not isolated large values.
The method proposed in this work will be demonstrated on the problem of speech filtering.

Convex and non-convex optimization are both common practice 
for the estimation of sparse vectors from noisy data \cite{Bach_2012_now}.
In both cases one often seeks the solution $\x\opt \in \RR^N$ to the problem
\[
	\x\opt = \arg \min_{\x} \Bigl\{  F(\x) = \half \norm{\y - \x}_2^2 + \lambda R(\x) \Bigr\}
\]
where $ R(\x) : \RR^N \to \RR $ is the regularization (or penalty) term
and $ \lam > 0 $.
Convex formulations are advantageous in that a wealth of convex optimization theory can be leveraged and robust algorithms with guaranteed convergence are available \cite{Boyd_convex}.
On the other hand, non-convex approaches are advantageous in that they usually yield sparser solutions for a given residual energy.
However, non-convex formulations are generally more difficult to solve (due to suboptimal local minima, initialization issues, etc.).
Also, solutions produced by non-convex formulations are generally discontinuous functions of input data (e.g., the discontinuity of the hard-threshold function). 

Generally, convex approaches are based on sparsity-promoting convex penalty functions (e.g., the $\ell_1$ norm), while non-convex approaches are based on non-convex penalty functions (e.g., the $\ell_p$ pseudo-norm with $p < 1$ \cite{Lorenz_2007}, re-weighted $\ell_2$/$\ell_1$ \cite{Candes_2008_JFAP, Wipf_2010_TSP}).
Other non-convex algorithms seek sparse solutions directly
(e.g., OMP \cite{Mallat_1998},
iterative hard thresholding \cite{Blumensath_2012_SP, Kingsbury_2003_ICIP,Portilla_2007_SPIE, Foucart_2010_SIAM},
and
greedy $\ell_1$ \cite{Kozlov_2010_geomath}).

In this work, we take a different approach, proposed by Blake and Zimmerman \cite{Blake_1987} and 
by Nikolova \cite{Nikolova_1998_ICIP}.
Namely, the use of a non-convex non-smooth penalty function chosen such that the
total cost function $F$ (consisting of data consistency and regularization terms) is strictly convex.
This is possible ``by balancing the positive second derivatives in the [data consistency term] against the negative second derivatives in the [penalty] terms'' \cite[page 132]{Blake_1987}.
This idea has been further extended by Nikolova et al.\ \cite{Nikolova_1998_TIP, Nikolova_1999_TIP, Nikolova_2008_SIAM, Nikolova_2010_TIP}.

The contribution of this work relates to (1) the formulation of the group-sparse denoising problem as a convex optimization problem albeit defined in terms of a non-convex penalty function, and (2) the derivation of a computationally efficient iterative algorithm that monotonically reduces the cost function value.
We utilize non-convex penalty functions (in fact, concave on the positive real line) with parametric forms;
and we identify an interval for the parameter that ensures the strict convexity of the total cost function, $F$.
As the total cost function is strictly convex, the minimizer is unique and can be obtained reliably using convex optimization techniques. 
The algorithm we present is derived according to the principle of majorization-minimization  (MM) \cite{FBDN_2007_TIP}.
The proposed approach:
\begin{enumerate}
\item
promotes sparsity more strongly than any convex penalty function can,
\item
is translation invariant (due to groups in the proposed method being fully overlapping),
\item
is computationally efficient ($O(N)$ per iteration) with decreasing cost function,
and
\item
requires no algorithmic parameters (step-size, Lagrange, etc.).
\end{enumerate}
We demonstrate below that the proposed approach substantially improves upon
our earlier work that considered only convex regularization \cite{Chen_2014_OGS}.

\subsection{Related Work}

The estimation and reconstruction of signals with group sparsity properties has been addressed by numerous authors.
We make a distinction between two cases:
non-overlapping groups \cite{Yuan_2006, Kowalski_2009_SIVP, Kowalski_2009_ACHA, Eldar_2009_Tinfo, Chartrand_2013_ICASSP}
and overlapping groups
\cite{Jenatton_2011_MLR, Deng_2011_techrep, Figueiredo_2011_SPARS, Jacob_2009_cnf, Bayram_2011_picassp, Bayram_2013_SIVP_CoSep_preprint, Peyre_2011_eusipco, Bach_2012_now, Mosci_2010_NIPS, Yuan_2011_NIPS, Chen_2012_AAS}.
The non-overlapping case is the easier case: when the groups are non-overlapping, there is a decoupling of variables, which simplifies the optimization problem. When the groups are overlapping, the variables are coupled.
In this case, it is common to define auxiliary variables (e.g., through the variable splitting technique)
and apply methods such as the alternating direction method of multipliers (ADMM) \cite{Boyd_2011_admm}.
This approach increases the number of variables (proportional to the group size) and hence increases memory usage and data indexing.
In previous work we describe the `overlapping group shrinkage' (OGS) algorithm \cite{Chen_2014_OGS} for the overlapping-group case that does not use auxiliary variables.
The OGS algorithm exhibits favorable asymptotic convergence in comparison with algorithms that use auxiliary variables \cite[Fig.\ 5]{Chen_2014_OGS}.
In comparison with previous work on convex optimization for overlapping group sparsity, including \cite{Chen_2014_OGS},
the current work promotes sparsity more strongly.
The current work extends the OGS algorithm to the case of non-convex regularization,
yet remains within the convex optimization framework.

As noted above,
the balancing of the data consistency term and the penalty term, so as to formulate a convex problem with a non-convex penalty term, was described in Refs.~\cite{Blake_1987, Nikolova_1998_ICIP} and extended in \cite{Nikolova_1998_TIP, Nikolova_1999_TIP, Nikolova_2008_SIAM, Nikolova_2010_TIP}.
This approach was used to initialize a scheme named `graduated non-convexity' (GNC) in \cite{Blake_1987}.
The goal of GNC is to minimize a non-convex function $F$ by minimizing a sequence of functions $F_k, \, k \geq 1$.
The first one is a convex approximation of $F$, and the subsequent ones are non-convex and progressively similar to $F$.
In order that the initial approximation of $F$ be convex, the penalty function must satisfy an eigenvalue condition \cite{Blake_1987}.
A looser condition, which promotes sparsity more strongly, can be expressed as a semidefinite program (SDP), but this incurs a higher computational cost \cite{Selesnick_2013_arXiv_MSC}.
In the method described here, we use the same balancing idea as in GNC; however, our goal is to minimize a convex function, not a non-convex one as in GNC.
In particular, we use the balancing idea to construct a convex function that maximally promotes sparsity, and we seek to subsequently solve this convex problem.
We note that here our primary goal is to capture group sparsity behavior, which is not considered in the GNC work.
We also note that the computationally demanding SDP arising in Ref.~\cite{Selesnick_2013_arXiv_MSC} does not arise in the current work.
The algorithm developed here is computationally simple.

\section{Preliminaries}

\subsection{Notation}

We will work with finite-length discrete signals which we denote in 
lower case bold.
The $N$-point signal $\x$ is written as
 \[
 	\x = [x(0), \dots, x(N-1)] \in \RR^N.
\]
We use the notation
\begin{equation}
	\x_{i,K} = [x(i), \, \dots, x(i+K-1)] \in \RR^K
\end{equation}
to denote the $i$-th group of vector $\x$ of size $K$.
We consistently use $ K $ (a positive integer) to denote the group size.
At the boundaries (i.e., for $i < 0$ and $i>N-K$), some indices of $ \x_{i,K} $ fall
outside $ \ZZ_N $.
We take these values as zero; i.e., 
for $ i \notin \ZZ_N $, we take  $ x(i) = 0 $.

We denote the
non-negative real line as
$	\nnrl := \{ x \in \RR \, : \, x \geq 0\} $
and
the positive real line as
$	\prl := \{ x \in \RR \, : \, x > 0\} $.
Given a function $ f : \RR \to \RR $, 
the left-sided and right-sided derivatives of $ f $ at $ x $ are denoted $f'(x^-)$ and $f'(x^+)$, respectively.
The notation $ A \!\setminus\! B $ denotes set difference; i.e., $ A \!\setminus\! B = \{ a \in A : a \notin B \} $.

\subsection{Penalty Functions}
\label{sec:pen}

We will make the following 
assumptions on the penalty function, $\reg : \RR \to \RR $.
\begin{enumerate}
\item
$ \reg $ is continuous on $ \RR $.
\item
$ \reg $ is twice differentiable on $ \RNZ $. 
\item
$ 	\reg(-x)  = \reg(x) $  (symmetric)
\item
$	\reg'(x) > 0, \  \forall x > 0 	$  (increasing on $\prl$)
\item
$	\reg''(x) \leq 0, \  \forall x > 0  $   (concave on $\prl$)
\item
$ \reg'(0^+) = 1 $ (unit slope at zero)
\item
$ \reg''(0^+) \leq \reg''(x), \; \forall x > 0 $ (maximally concave at zero)
\end{enumerate}

We will utilize penalty functions parameterized by a scalar parameter, $ a >  0 $.
We use the notation $ \reg(x; a) $ to denote the parameterized form.

Examples of parameterized penalty functions satisfying the assumptions above are
the logarithmic penalty,
\begin{equation}
	\label{eq:log}
		\reg\slog(x; a) = \frac{1}{a} \log(1 + a \abs{x}),
\end{equation}
the arctangent penalty \cite{Selesnick_2013_arXiv_MSC},
\begin{equation}
	\label{eq:atan}
		\reg\satan(x; a) =
	 \frac{2}{a\sqrt{3}}  \, \left( 
	 	\tan\inv\left( \frac{1+2 a \abs{x}}{\sqrt{3}}\right) 	- \frac{\pi}{6}
		\right),
\end{equation}
and the first order rational function \cite{Geman_1992_PAMI}
\begin{equation}
	\label{eq:rat}
		\reg\srat(x; a) = \frac{\abs{x}}{1 + a \abs{x} / 2}.
\end{equation}
The rational penalty is defined for $ a \geq 0 $.
The log and atan penalties are defined for $ a > 0 $.
Note that as $ a \to 0 $, the three penalty functions approach the absolute value function.
They are illustrated in Fig.~\ref{fig:penalties}.

For later use, we record the value of the right-sided second derivative
of the three penalty functions:
\begin{equation}
	\label{eq:zplus}
	\reg''\slog(0^+; a) = \reg''\satan(0^+; a) = \reg''\srat(0^+; a) = -a.
\end{equation}

Note that the $\ell_p$ pseudo-norm ($ 0 < p < 1$),
i.e., $\reg(x) = \abs{x}^p$, does not 
satisfy the above assumptions.
It does not have unit slope at zero
nor can it be normalized or scaled to do so.

\begin{figure}
	\centering
		\includegraphics{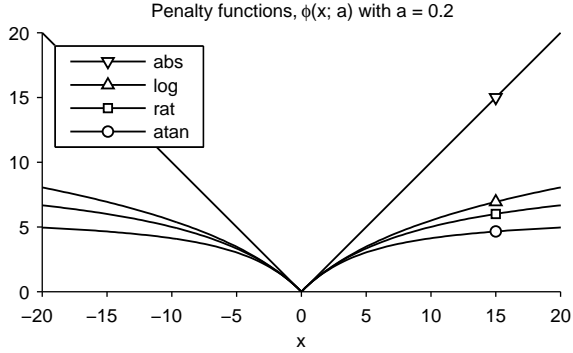}
	\caption{
	Several sparsity promoting penalty functions
	satisfying the assumptions in Sec.~\ref{sec:pen}.
	}
	\label{fig:penalties}
\end{figure}

\subsection{Threshold Functions}
\label{sec:thresh}

Proximity operators are a fundamental tool in efficient
sparse signal processing \cite{Combettes_2011_chap, Combettes_2008_SIAM}.
In the scalar case, proximity operators are thresholding/shrinkage function derived using a convex penalty function.
In this work, we utilize non-convex penalty functions; however, we can still define a threshold function
similar to the definition of a proximity operator.
The following proposition is closely related to 
Lemma 3.1 in \cite{Moulin_1999_Tinfo}
and 
Theorem 3.3 in \cite{Nikolova_2005_MMS},
both of which analyze the behavior of $ \tf $
for non-smooth,
not necessarily convex, $ \reg $.

\begin{prop}
\label{prop:scalar_thresh}
Define $ \tf : \RR \to \RR $ by
\begin{equation}
	\label{eq:deftf}
	\tf(y) =
	\arg \min_{x \in \RR} 	\;
	\Big\{ G(x) = \half \abs{y - x}^2 + \lam \reg(x) \Big\}
\end{equation}
where
$ G : \RR \to \RR $,
$ \lam > 0 $,
and 
$ \reg $ satisfies the assumptions in Sec.~\ref{sec:pen}.
Suppose also that $ G $ is strictly convex.
If $ \abs{y} \leq \lam $, then the unique minimizer of $ G $ is zero.
That is, $ \tf $ is a \emph{threshold} function and  $ \lam $ is the \emph{threshold} value.
\end{prop}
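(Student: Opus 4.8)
The plan is to analyze the first-order optimality conditions of the strictly convex function $G$ at $x = 0$, using the one-sided derivatives of the non-smooth penalty $\reg$. Since $G$ is strictly convex, it has a unique minimizer, and $x^* = 0$ is that minimizer if and only if $0$ is a subgradient of $G$ at $0$, equivalently if and only if both one-sided derivatives $G'(0^+)$ and $G'(0^-)$ satisfy $G'(0^-) \leq 0 \leq G'(0^+)$. So the whole argument reduces to computing these one-sided derivatives and showing the inequalities hold precisely when $\abs{y} \leq \lam$.

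First I would write out, for $x > 0$, $G'(x) = (x - y) + \lam \reg'(x)$, and take the limit as $x \to 0^+$. Using assumption (6) that $\reg'(0^+) = 1$, this gives $G'(0^+) = -y + \lam$. By the symmetry assumption (3), $\reg(-x) = \reg(x)$, so $\reg'(0^-) = -\reg'(0^+) = -1$, which yields $G'(0^-) = -y - \lam$. Then the optimality condition $G'(0^-) \leq 0 \leq G'(0^+)$ becomes $-y - \lam \leq 0$ and $-y + \lam \geq 0$, i.e.\ $-\lam \leq y \leq \lam$, which is exactly $\abs{y} \leq \lam$. Since $\lam > 0$, when $\abs{y} \leq \lam$ both inequalities hold, so $0$ is a minimizer; strict convexity makes it the unique one, so $\tf(y) = 0$.

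The one mild subtlety — and the step I expect to require the most care — is justifying that ``$0$ minimizes $G$ iff $G'(0^-) \leq 0 \leq G'(0^+)$'' for this non-smooth $G$. I would handle it directly rather than invoking general subdifferential calculus: by convexity of $G$, the one-sided derivatives exist everywhere; for $x > 0$ convexity gives $G(x) - G(0) \geq x \, G'(0^+) \geq 0$ when $G'(0^+) \geq 0$, and similarly for $x < 0$ using $G'(0^-) \leq 0$, so $G(0) \leq G(x)$ for all $x$. Conversely, if either inequality fails, a short Taylor-type estimate shows $G$ decreases in one direction away from $0$, so $0$ is not the minimizer. This is routine, so the real content of the proposition is just the clean computation of the two one-sided derivatives via assumptions (3) and (6); the concavity and maximal-concavity assumptions (5) and (7) play no role here (they are needed elsewhere, for the convexity of $G$ itself, which is hypothesized here rather than proved).
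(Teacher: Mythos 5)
Your proof is correct. The paper itself does not argue the scalar case directly: it proves Proposition~1 simply as a special case of Proposition~\ref{prop:group_thres} (the multivariate threshold result), whose proof invokes the subdifferential of the $\ell_2$ norm, $\partial \|\cdot\|_2(\0) = \{\bv : \|\bv\|_2 \leq 1\}$, and the optimality criterion $\0 \in \partial H(\x\opt)$ cited from Hiriart-Urruty and Lemar\'echal. Your argument is the scalar instantiation of that same first-order optimality principle, but carried out by hand: you compute $G'(0^+) = \lam - y$ and $G'(0^-) = -\lam - y$ from assumptions (3) and (6), and verify directly that for a convex function on $\RR$ the condition $G'(0^-) \leq 0 \leq G'(0^+)$ characterizes $0$ as a global minimizer (your direct verification also settles existence of the minimizer at $y$ with $\abs{y}\leq\lam$, since you show $G(0) \leq G(x)$ for all $x$). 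What your route buys is a self-contained, elementary proof needing no convex-analysis references and no detour through $\RR^K$; what the paper's route buys is economy and generality, since the subgradient argument must be made anyway for the group case $K > 1$, and the scalar statement then follows with no extra work. Your closing observation that assumptions (5) and (7) are not used — convexity of $G$ being hypothesized — is also accurate and matches the paper's structure, where those assumptions enter only in Proposition~\ref{prop:cvxcond} and Theorem~\ref{prop:OGcvxcond}.
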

\begin{proof}
This is a special case of Proposition~\ref{prop:group_thres}
wherein $ \tf $ is a multivariate threshold function, $ \tf : \RR^K \to \RR^K $.
\end{proof}

Figure~\ref{fig:thresh_funs} illustrates threshold functions 
corresponding to several penalty functions.
We use $ \lam = 4 $ and $ a = 0.2 $.
The threshold function corresponding to the absolute value penalty function
is called the \emph{soft} threshold function \cite{Donoho_1994_Biom}.
Notice that, except for the soft threshold function,
the threshold functions approach the identity function.
The atan threshold function approaches identity the fastest.

The fact that the soft threshold function reduces large values by a constant amount is considered its deficiency. 
In the estimation of sparse signals in AWGN, this behavior results in a systematic underestimation (bias) of large
magnitude signal values \cite{Fan_2001_JASA}.
Hence, threshold functions that are asymptotically unbiased are often preferred to the soft threshold function, 
and the penalty functions from which they are derived promote sparsity more strongly than the $\ell_1$ norm \cite{Candes_2008_JFAP, Chartrand_2007_SPL, Gasso_2009_TSP, Foucart_2009_ACHA}.
The atan penalty function is derived specifically for its favorable behavior in this regard \cite{Selesnick_2013_arXiv_MSC}.

As shown in Ref.~\cite{Selesnick_2013_arXiv_MSC}, 
if $ \reg $ satisfies the above assumptions, then the right-sided derivative 
of $ \tf $ at the threshold 
is given by
\begin{equation}
	\tf'(\lam^+) = \frac{1}{1 + \lam \reg''(0^+)}.
\end{equation}
Hence, with parameters $ \lam = 4 $ and $ a = 0.2 $, we use \eqref{eq:zplus} to find that $ \tf'(\lam^+) = 5$
for $  \reg\slog $,  $ \reg\satan $, and $ \reg\srat $.
That is, each of these threshold functions in Fig.~\ref{fig:thresh_funs}
have the same derivative at $\lam^+$, but they approach the  identity at different rates.

\begin{figure}
	\centering
	\includegraphics{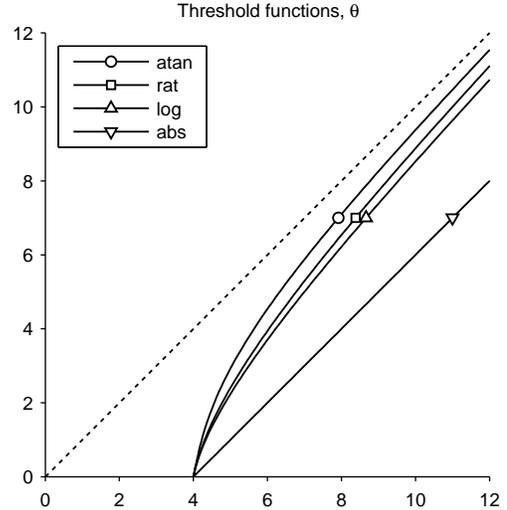}
	\caption{
	Threshold functions derived from the four penalty functions
	given in Sec.~\ref{sec:pen}; three of which are non-convex.
	}
	\label{fig:thresh_funs}
\end{figure}

\section{OGS with non-convex regularization}

For denoising group-sparse signals in AWGN,
we propose to minimize the cost function, $ F : \RR^N \to \RR $,
\begin{equation}
	\label{eq:defF}
	F(\x) = \half \norm{ \y - \x}_2^2  + \lambda \sum_{i \in \ZZ} \reg(\norm{\x_{i,K}}_2; a)  
\end{equation}
where $ \reg $ is a (non-convex) sparsity promoting penalty function satisfying the assumptions in Sec.~\ref{sec:pen},
and $ \lam > 0 $.
The group size,
$ K $ (a positive integer),
should be selected based on the size of the groups (clusters) arising in the data. 
This constitutes one's `prior knowledge' regarding the group sparsity behavior of the data 
and may need to be obtained through some trial-and-error.

In order to leverage convex optimization principles
and avoid non-convex optimization issues (local minima, sensitivity to noise, etc.),
we seek to restrict $ a $ so that $ F $ is strictly convex.
We note that the minimization of $ F $ is not so straight forward.
First, all the variables are coupled due to the overlapping group structure
of the regularization term.
That is, each component $ x(i) $ depends on every data sample $ y(k) $
(albeit the influence diminishes with distance $\abs{i-k}$).
Secondly, $ F $ is not differentiable.
In particular, $ F $ is generally not differentiable at the minimizer, $ \x\opt $,
due to the sparsity of $\x\opt$ induced by the regularizer.
(The penalty function, $\reg$, is non-differentiable at zero).
For these reasons, it is desirable that $ F $ be strictly convex.

In the following, we address the questions:
\begin{enumerate}
\item
For what values of $ a $ is $ F $ strictly convex?
\item
When $ F $ is strictly convex, how can the unique minimizer, $ \x\opt $, be efficiently computed?
\end{enumerate}

First, we make a few remarks.
If $ K = 1 $, then $ F $ simplifies to
\begin{equation}
	F(\x) = \sum_i \Bigl[ \half \abs{ y(i) - x(i)}^2  + \lambda \reg(x(i); a)  \Bigr],
\end{equation}
the components $ x(i) $ are not coupled, and the minimization of $ F $ 
amounts to component-wise non-linear thresholding; i.e., $ x\opt(i) = \tf( y(i); a ) $.
In this case, the cost function $ F $ does not promote any group structure.

If $ \reg $ is the absolute value function, i.e., $ \reg(x) = \abs{x}$,
then the cost function $ F $ in \eqref{eq:defF} is the same cost function
considered in our earlier work \cite{Chen_2014_OGS},
which considers only convex regularization.

If $ K = 1 $ and $ \reg $ is the absolute value function,
then the minimizer of $ F $ is given by point-wise soft thresholding of $ \y $.

The current work addresses the case $ K > 1 $ and $ \reg $ a non-convex regularizer,
so as to promote group sparsity more strongly in comparison to convex regularization.
The enhanced sparsity will be illustrated in Example 1 in Sect.~\ref{sec:Ex1}.

We also have the following result, similar to lemma 1 of Ref.~\cite{Yuan_2011_NIPS} which
considered convex regularizers promoting group sparsity.

\begin{lemma}
\label{lemma:minprop}

Let
$ \reg(\cdot, a) : \RR \to \RR $ satisfy the assumptions in Sec.~\ref{sec:pen}
and 
define $ F $ as in \eqref{eq:defF}.
Suppose $ F $ is strictly convex and that $ \x\opt $ is the minimizer of $ F $.
\begin{enumerate}
\item
If $ y(i) = 0 $ for some $ i $, then $ x\opt(i) = 0 $.
\item
If $ y(i) > 0 $ for some $ i $, then $ x\opt(i) \geq 0 $.
\item
If $ y(i) < 0 $ for some $ i $, then $ x\opt(i) \leq 0 $.
\item
$ \abs{ x\opt(i) } \leq \abs{y(i)},  \ \forall i \in \ZZ_N $.
\end{enumerate}
\end{lemma}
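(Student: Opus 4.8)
The plan is to exploit the separability structure of the first-order optimality conditions and a coordinatewise ``reflection'' argument. Since $F$ is strictly convex and continuous (the penalty $\reg$ is continuous, hence so is each $\reg(\norm{\x_{i,K}}_2;a)$), it has a unique minimizer $\x\opt$. The key observation is that $F$ depends on the data $\y$ only through the quadratic term $\half\norm{\y-\x}_2^2 = \half\sum_i \abs{y(i)-x(i)}^2$, while the regularizer $\lam\sum_i \reg(\norm{\x_{i,K}}_2;a)$ depends only on the magnitudes of the components of $\x$ (through the $\ell_2$ norms of the groups), not their signs. This sign-insensitivity of the penalty is what drives all four parts.

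For part~(4), I would argue as follows. Fix any index $i$. Construct a candidate point $\tilde\x$ by clipping the $i$-th coordinate of $\x\opt$ toward $y(i)$: set $\tilde x(j) = x\opt(j)$ for $j \neq i$, and let $\tilde x(i)$ be $x\opt(i)$ if $\abs{x\opt(i)} \leq \abs{y(i)}$, and otherwise the value of magnitude $\abs{y(i)}$ with the same sign as $x\opt(i)$ (equivalently, project $x\opt(i)$ onto the interval between $-\abs{y(i)}$ and $\abs{y(i)}$ that contains $y(i)$, so that $\abs{y(i)-\tilde x(i)}\le\abs{y(i)-x\opt(i)}$). This operation does not increase $\abs{y(i)-x(i)}$, so the data term does not increase; and since $\abs{\tilde x(i)} \leq \abs{x\opt(i)}$ while all other coordinates are unchanged, each group norm $\norm{\tilde\x_{k,K}}_2 \le \norm{\x\opt_{k,K}}_2$, and because $\reg$ is increasing on $\prl$ (assumption~4), the penalty does not increase either. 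Hence $F(\tilde\x) \leq F(\x\opt)$, and uniqueness of the minimizer forces $\tilde\x = \x\opt$, i.e.\ $\abs{x\opt(i)} \leq \abs{y(i)}$.

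Parts (1)--(3) follow the same template with a sign flip instead of a clip. For part~(2), suppose $y(i)>0$ but $x\opt(i)<0$; replace the $i$-th coordinate by $-x\opt(i)>0$. The group norms are unchanged (penalty unchanged), but $\abs{y(i)-(-x\opt(i))} = y(i)+\abs{x\opt(i)} > \abs{y(i)-x\opt(i)}$ strictly, so $F$ strictly decreases — contradiction; hence $x\opt(i)\ge 0$. Part~(3) is symmetric. Part~(1) combines both: if $y(i)=0$ and $x\opt(i)\neq 0$, flipping the sign leaves $F$ unchanged, so by strict convexity the midpoint (which has $i$-th coordinate $0$) would give a strictly smaller value, forcing $x\opt(i)=0$; alternatively, apply part~(4) twice to conclude $\abs{x\opt(i)}\le\abs{y(i)}=0$ directly.

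I do not anticipate a serious obstacle here; the only point requiring a little care is the monotonicity step in part~(4) — one must verify that reducing a single coordinate's magnitude can only reduce (never increase) each of the $K$ overlapping group norms containing that coordinate, which is immediate from $\norm{\cdot}_2$ being coordinatewise nondecreasing in magnitudes, and then invoke assumption~4 that $\reg$ is increasing on $\prl$ together with $\reg(0)\le\reg(t)$ for $t\ge0$ (continuity plus increasing on $\prl$). The strict-convexity hypothesis is used only to upgrade ``$F(\tilde\x)\le F(\x\opt)$'' to ``$\tilde\x=\x\opt$'' in part~(4) and to handle the tie in part~(1); parts (2)--(3) need only that $\x\opt$ is a minimizer.
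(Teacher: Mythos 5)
Your proposal is correct and takes essentially the same route as the paper: perturb a single coordinate of the minimizer (zeroing, clipping toward $y(i)$, or sign-flipping) so that the data term does not increase while, since reducing a coordinate's magnitude cannot increase any group norm and $\reg$ is nondecreasing on $\nnrl$, the penalty does not increase either, then contradict optimality or uniqueness. One small slip: in part (2) the displayed relation swaps the two distances — since $x\opt(i)<0<y(i)$, it is the original distance $\abs{y(i)-x\opt(i)}=y(i)+\abs{x\opt(i)}$ that strictly exceeds the flipped one $\abs{y(i)+x\opt(i)}$ — but the intended conclusion that the sign flip strictly decreases $F$ (penalty unchanged) is valid, so the argument stands.
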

\begin{proof}
1) 
Define $S = \{i \in \ZZ_N : y(i)\neq 0\}$ and $\bar{S} = \ZZ_N \!\setminus\! S$. 
Given $ \x \in \RR^N $, 
define $\tilde{\x}\in\RR^N$ as $\tilde{x}(i) = x(i)$ for $ i \in S $, and $\tilde{x}(i) = 0$ for $i\in\bar{S}$.
For each group $i \in \ZZ_N$, we have $\norm{\x_{i,K}}_2 \geq \norm{\tilde{\x}_{i,K}}_2$.
Since $\reg(t)$ is increasing for $t \geq 0$, we have $\reg(\norm{\x_{i,K}}_2) \geq \reg(\norm{\tilde{\x}_{i,K}}_2)$. 
Therefore, for all $\x \in \RR^N$,
\begin{align*}
	F(\x) &=  \frac{1}{2} \norm{ \y - \x }_2^2  + \sum_i\lambda\reg(\norm{\x_{i,K}}_2; a) \label{eq:cost}
	\\
	      &= \frac{1}{2} \norm{ \y - \tilde{\x} }_2^2  + \half \sum_{i\in\bar{S}}|x(i)|^2  + \sum_i\lambda\reg(\norm{\x_{i,K}}_2; a)
	\\
	      &\geq \frac{1}{2} \norm{ \y - \tilde{\x} }_2^2    + \sum_i\lambda\reg(\norm{\tilde{\x}_{i,K}}_2; a)  
	\\    &=F(\tilde{\x}).
\end{align*}
This implies $x^*(i) = 0$ for $i\in\bar{S}$.

2)
Proof by contradiction.
Suppose $y(i) > 0$, but $x\opt(i) < 0$ for some $ i $.
Define $\tilde{\x}$ by $\tilde{x}(i) = 0$, and $\tilde{x}(n) = x\opt(n)$ for $n \neq i$.
It can be shown as in 1) that $F(\x\opt) > F(\tilde{\x})$.
This contradicts the optimality of $\x\opt$.

3) 
The proof is like 2).

4)
Proof by contradiction.
Suppose $ y(i) \geq 0 $, but $x\opt(i) > y(i)$
for some $ i $.
Define $\tilde{\x}$ by $\tilde{x}(i) = y(i) $, and $\tilde{x}(n) = x\opt(n)$ for $n \neq i$.
It can shown as in 1) that $F(\x\opt) > F(\tilde{\x})$.
This contradicts the optimality of $\x\opt$.
Together with 2), it follows that if $ y(i) \geq 0 $, then  $ 0 \leq x\opt(i) \leq y(i) $.
Similarly, 
 if $ y(i) \leq 0 $, then  $ y(i) \leq x\opt(i) \leq 0 $.
\end{proof}

\subsection{Group Thresholding}

In order to determine the convexity of $ F $, we first consider a simpler cost function, $ H $,
which consists of a single group.
What values of $ a $ ensure that $ H $ is strictly convex?

\begin{prop}
\label{prop:cvxcond}
Consider the functions  $ H : \RR^K \to \RR $ and $G : \RR \to \RR$,
defined as
\begin{align}
	H(\x) &= \half \norm{ \bar{\y} - \x}_2^2  + \lambda \reg(\norm{\x}_2; a)
	\\
	G(v)  &= \half |\bar{y} - v|^2  + \lambda \reg(v; a)
\end{align}
where
$ \bar{\y} \in \RR^K $, 
$ \bar{y} \in \RR$,
$ \lam > 0 $,
and
$ \reg(\cdot, a) : \RR \to \RR $ satisfies the assumptions in Sec.~\ref{sec:pen}.
Then $ H $ is strictly convex iff $G$ is strictly convex.
Furthermore, if
\begin{equation}
	\label{eq:regdplam}
	\reg''(0^+; a) > -\frac{1}{\lam},
\end{equation}
then $H$ and $G$ are both strictly convex.
\end{prop}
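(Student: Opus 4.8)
The plan is to peel the data-dependent affine parts off of both $H$ and $G$ so that each reduces to a statement about the single even function
\[
	\psi(t) := \half t^2 + \lam\,\reg(t;a), \qquad t \in \RR,
\]
which is even because $\reg$ is. Expanding the quadratic terms (and using $\norm{\x}_2 \geq 0$) gives $H(\x) = c_1 - \langle\bar\y,\x\rangle + \psi(\norm{\x}_2)$ and $G(v) = c_2 - \bar y\,v + \psi(v)$ for constants $c_1,c_2$. Adding an affine function to a function neither creates nor destroys strict convexity, so $H$ is strictly convex iff $\x \mapsto \psi(\norm{\x}_2)$ is strictly convex on $\RR^K$ and $G$ is strictly convex iff $\psi$ is strictly convex on $\RR$. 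The proposition therefore follows once I establish: (a) $\x\mapsto\psi(\norm{\x}_2)$ is strictly convex on $\RR^K$ iff $\psi$ is strictly convex on $\RR$; and (b) condition \eqref{eq:regdplam} forces $\psi$ to be strictly convex on $\RR$.

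For (a), the ``only if'' direction is immediate: restricting $\x\mapsto\psi(\norm{\x}_2)$ to the line $t\mapsto t\,\e$ through the origin ($\e$ any unit vector) gives $t\mapsto\psi(\abs{t})=\psi(t)$, and the restriction of a strictly convex function along an injective affine map is strictly convex. For ``if'', note first that an even strictly convex $\psi$ is strictly increasing on $\nnrl$ (for $0\leq s<t$, write $s$ as a convex combination of the distinct points $-t$ and $t$). Now take $\x\neq\y$ and compare $\norm{(\x+\y)/2}_2$ with $\half(\norm{\x}_2+\norm{\y}_2)$: if $\x$ and $\y$ do not lie on a common ray from the origin the triangle inequality is strict, so strict monotonicity of $\psi$ followed by its convexity gives a strict midpoint inequality for $\psi\circ\norm{\cdot}_2$; if they do lie on a common ray then necessarily $\norm{\x}_2\neq\norm{\y}_2$, and strict convexity of $\psi$ at these two distinct scalars gives it. Strict midpoint convexity plus continuity of $\psi\circ\norm{\cdot}_2$ yields strict convexity. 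This radial case split is the step I expect to be the main obstacle; everything around it is bookkeeping.

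For (b), I would isolate the kink at the origin by writing $\psi = \lam\,\abs{\cdot} + \rho$ with $\rho(t) := \half t^2 + \lam\bigl(\reg(t;a) - \abs{t}\bigr)$. The function $\rho$ is even and continuous, and for $t\neq 0$ one has $\rho''(t) = 1 + \lam\,\reg''(\abs{t};a) \geq 1 + \lam\,\reg''(0^+;a) > 0$ by the ``maximally concave at zero'' assumption (assumption~(7) of Sec.~\ref{sec:pen}) together with hypothesis \eqref{eq:regdplam}; moreover the ``unit slope'' assumption~(6) gives $\rho'(0^+) = \lam\bigl(\reg'(0^+;a)-1\bigr) = 0$, so since $\rho$ is even it is differentiable at $0$ with $\rho'(0)=0$. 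Hence $\rho'$ is continuous and strictly increasing on all of $\RR$, making $\rho$ strictly convex; adding the convex function $t\mapsto\lam\abs{t}$ keeps the sum strictly convex, so $\psi$ is strictly convex on $\RR$. Together with (a) and the reductions of the first paragraph, this shows that \eqref{eq:regdplam} makes both $H$ and $G$ strictly convex, completing the proof.
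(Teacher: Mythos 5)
Your proposal is correct, and its skeleton is the same as the paper's: strip the affine terms so that everything reduces to $B(v)=\half v^2+\lam\reg(v;a)$ (your $\psi$) and its radial extension $B(\norm{\x}_2)$, prove the equivalence of their strict convexity, and then verify strict convexity of the scalar function under \eqref{eq:regdplam} using assumptions 6 and 7. Where you differ is in how the two technical steps are discharged. For the equivalence, the paper's ``$B$ strictly convex $\Rightarrow$ $A$ strictly convex'' direction leans on the monotone-composition result (Proposition~2.1.7 of Hiriart-Urruty and Lemar\'echal) applied to the increasing convex/strictly convex $B$ composed with $\norm{\cdot}_2$, while you prove the composition's \emph{strict} convexity from scratch via the ray case-split (strict triangle inequality off a common ray, strict convexity of $\psi$ at distinct norms on a common ray) plus strict midpoint convexity and continuity; this is more self-contained and in fact makes explicit the strictness argument that the citation glosses over. (The ``only if'' direction, restriction to a line through the origin using evenness, is identical to the paper's.) For the scalar step, the paper invokes its appendix corollary (continuity, $B''>0$ on $\RNZ$, and a positive jump $B'(0^-)<B'(0^+)$ imply strict convexity), whereas you split off the kink as $\psi=\lam\abs{\cdot}+\rho$ and show $\rho$ is differentiable at $0$ with strictly increasing derivative; both hinge on exactly the same use of the unit-slope and maximal-concavity-at-zero assumptions together with \eqref{eq:regdplam}. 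The only sketch-level gap is your assertion that $\rho'$ is continuous at $0$, i.e.\ that $\lim_{t\to 0^+}\reg'(t;a)=\reg'(0^+;a)=1$; this does hold (it follows from the concavity of $\reg$ on $\prl$, or from the mean value theorem together with the monotonicity of $\rho'$ on $(0,\infty)$), but it deserves a sentence, since without it the strict monotonicity of $\rho'$ across $0$ is not yet established.
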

\begin{proof}
Let us expand $ H$ and $ G $ as
\begin{align}
	H(\x) &= \half \norm{ \bar{\y} }_2^2 + \half \norm{\x}_2^2 - \bar{\y}\tr \x + \lambda \reg(\norm{\x}_2; a)
	\\
	G(v) &= \half |\bar{y}|^2 + \half |v|^2 - \bar{y}x + \lambda \reg(v; a).
\end{align}
Define
$ A : \RR^K \to \RR $ and $B : \RR \to \RR$,
\begin{align}
	A(\x) &= 0.5 \norm{\x}_2^2 + \lambda \reg(\norm{\x}_2; a),
	\\
	B(v)  &= 0.5 \abs{v}^2 +  \lambda \reg(v; a).
	\label{eq:B}
\end{align}
It can be observed that $A$ is strictly convex iff $H$ is strictly convex.
Similarly $B$ is strictly convex iff $G$ is.

We claim that $ A $  is strictly convex 
if and only if the function $ B $ is strictly convex. 
[Note that $ A(\x) = B(\norm{\x}_2)$.]

Suppose $B$ is  strictly convex. From \eqref{eq:B}, $B$ is increasing on $\nnrl$.
Based on the convexity of $\norm{\x}_2$ and Proposition~2.1.7 of 
Ref.~\cite[page~89]{Hiriart_2001}, $B(\norm{\x}_2)$ is strictly convex,
and hence $A$ is strictly convex.

Suppose $A$ is strictly convex. Given $v_1,v_2\in\RR$, define $\x_1 = v_1\,\e$ and $\x_2 = v_2\,\e$ with $\norm{\e}_2 = 1$.
Note that $B$ in \eqref{eq:B} is symmetric.
For all $\alpha, \beta$ satisfying $\alpha\in (0, 1)$ and $\alpha + \beta = 1$, we have
\begin{align*}
	B(\alpha v_1 + \beta v_2) & = B(\abs{\alpha v_1 + \beta v_2})
	= A(\alpha \x_1 + \beta \x_2)
	\\ 
	& <  \alpha A(\x_1) + \beta A( \x_2) 
 	= \alpha B(\abs{v_1}) + \beta B( \abs{v_2})
 	\\
	& = \alpha B(v_1) + \beta B( v_2),
\end{align*}
which implies the strict convexity of $B$. 

To prove the second part of this proposition,
according to the assumptions on $\reg$ in Sec.~\ref{sec:pen},
$ B $ is continuous on $\RR$,
twice differentiable on $ \RNZ $,
and symmetric.
Hence, from Corollary~\ref{thm:convexity} (see Appendix), it is sufficient that $B''$ be positive on $\RNZ$
and that $ B'(0^-) < B'(0^+) $.

Note that $ B'(0^+) = \lam \reg(0^+; a)  = 1 $; 
and by symmetry $ B'(0^-) = \lam \reg(0^-; a)  = -1 $.
Hence  $ B'(0^-) < B'(0^+) $.

To ensure the second derivative of $ B $ is positive on $\RNZ$,
we have the condition 
\begin{equation}
	B''(v) = 1 + \lambda \reg''(v; a) > 0, \quad \text{for $v > 0$}
\end{equation}
or
\begin{equation}
	\reg''(v; a) > -\frac{1}{\lam}, \quad v > 0.
\end{equation}
Due to assumption 7 on $\reg$
[i.e., $ \reg''(0^+) \leq \reg''(x), \; \forall x > 0 $],
we have \eqref{eq:regdplam}.
\end{proof}

The condition \eqref{eq:regdplam} can be used to determine values of $ a $ that ensure strict convexity of $ H $.
For the log, atan, and rational penalty functions
($  \reg\slog $,  $ \reg\satan $, $ \reg\srat $),
we use \eqref{eq:zplus} to obtain the following intervals for $ a $ ensuring strict convexity of $H$.

\begin{cor}
Suppose $ \reg $ is one of the penalty functions given in Sec.~\ref{sec:pen} 
($  \reg\slog $,  $ \reg\satan $, $ \reg\srat $).
Then $ H $ is strictly convex if
\begin{equation}
	0 < a < \frac{1}{\lam}.
\end{equation}
\end{cor}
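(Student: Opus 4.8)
The plan is to invoke Proposition~\ref{prop:cvxcond} directly, reducing the corollary to a one-line verification. That proposition says $H$ is strictly convex whenever the penalty satisfies $\reg''(0^+;a) > -1/\lam$. So the only task is to translate this abstract inequality into an explicit interval for $a$ in the case of the three named penalties.

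First I would recall equation~\eqref{eq:zplus}, which records that for $\reg\slog$, $\reg\satan$, and $\reg\srat$ the right-sided second derivative at the origin is exactly
\begin{equation*}
	\reg''(0^+; a) = -a.
\end{equation*}
Substituting this into the sufficient condition \eqref{eq:regdplam} from Proposition~\ref{prop:cvxcond} gives $-a > -1/\lam$, i.e.\ $a < 1/\lam$. Combined with the standing requirement $a > 0$ (and $a \geq 0$ for the rational penalty, but $a=0$ reduces to the convex $\ell_1$ case and we want $a>0$ for the non-convex regime), this yields the stated interval $0 < a < 1/\lam$, and strict convexity of $H$ follows immediately.

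There is essentially no obstacle here: the work was all done in Proposition~\ref{prop:cvxcond}, and this corollary is just its specialization via the tabulated values in \eqref{eq:zplus}. The only point worth a moment's care is making sure the three penalties genuinely satisfy all the assumptions of Sec.~\ref{sec:pen} so that Proposition~\ref{prop:cvxcond} applies --- but this was already asserted in Sec.~\ref{sec:pen} when these examples were introduced, so it can simply be cited. Thus the proof amounts to: apply \eqref{eq:zplus}, apply \eqref{eq:regdplam}, combine with $a>0$.
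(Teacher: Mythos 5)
Your proposal is correct and matches the paper's own (implicit) argument exactly: the paper obtains this corollary by substituting the value $\reg''(0^+;a) = -a$ from \eqref{eq:zplus} into the sufficient condition \eqref{eq:regdplam} of Proposition~\ref{prop:cvxcond}, yielding $0 < a < 1/\lam$. Nothing is missing.
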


Based on a strictly convex function $ H $, 
we may define a multivariate threshold/shrinkage function $ \tf : \RR^K \to \RR^K $
as in the scalar case \eqref{eq:deftf}.
It is informative to note the threshold of the multivariate thresholding function.

\begin{prop}
\label{prop:group_thres}
Define $ \tf : \RR^K \to \RR^K $ by
\begin{equation}
	\label{eq:mdeftf}
	\tf(\y) =
	\arg \min_{\x \in \RR^K} 	\;
	\Big\{ H(\x) = \half \norm{\y - \x}_2^2 + \lam \reg( \norm{\x}_2) \Big\}
\end{equation}
where
$ \lam > 0 $,
and 
$ \reg $ satisfies the assumptions in Sec.~\ref{sec:pen}.
Suppose also that $ H $ is strictly convex.
If $ \norm{\y}_2 < \lam $, then the unique minimizer of $ H $ is the zero vector.
That is, $ \tf $ is a multivariate threshold function with 
threshold  $ \lam $.
\end{prop}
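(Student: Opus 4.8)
The plan is to show that when $\norm{\y}_2 < \lam$, the zero vector satisfies the optimality condition for the strictly convex function $H$, and then invoke strict convexity to conclude it is the unique minimizer. Since $H$ is strictly convex, it suffices to verify that $\0$ is a stationary point in the appropriate (subdifferential) sense; equivalently, I would show that moving away from $\0$ in any direction strictly increases $H$. First I would reduce the multivariate problem to a scalar one using the same device as in Proposition~\ref{prop:cvxcond}: for any candidate nonzero minimizer $\x$, write $\x = v\,\e$ with $\norm{\e}_2 = 1$ and $v = \norm{\x}_2 > 0$, and compare $H(\x)$ with $H(\0)$. Using $\norm{\y-\x}_2^2 = \norm{\y}_2^2 - 2 v\, \y\tr\e + v^2 \geq \norm{\y}_2^2 - 2 v \norm{\y}_2 + v^2$ by Cauchy--Schwarz, we get
\begin{equation}
	H(\x) \geq H(\0) + \half v^2 - v\norm{\y}_2 + \lam\bigl(\reg(v;a) - \reg(0;a)\bigr).
\end{equation}

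Next I would analyze the scalar function $g(v) := \half v^2 - v\norm{\y}_2 + \lam\reg(v;a)$ for $v \geq 0$, whose value at $v=0$ is $\lam\reg(0;a)$; I want to show $g(v) > g(0)$ for all $v > 0$ whenever $\norm{\y}_2 < \lam$. Using assumption~6 on $\reg$ (unit slope at zero, $\reg'(0^+) = 1$) together with concavity on $\prl$ (assumption~5), we have $\reg(v;a) - \reg(0;a) \geq$ ... wait, concavity gives the reverse; instead, since $\reg$ is concave on $\prl$ with right derivative at $0$ equal to $1$, the tangent line from the right lies above the graph, so $\reg(v;a) - \reg(0;a) \leq v$. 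That bounds the wrong way. The correct route is: concavity of $\reg$ on $(0,\infty)$ means $\reg'$ is nonincreasing, so $\reg(v;a) - \reg(0;a) = \int_0^v \reg'(t)\,dt$, and since $\reg'(t) \to 1$ as $t\to 0^+$ but $\reg'$ decreases, this integral is at most $v$ but I need a lower bound. Here is where strict convexity of $H$ (hence of $B$ in \eqref{eq:B}, hence $B''(v) = 1 + \lam\reg''(v;a) > 0$ on $\prl$) enters: it guarantees $g''(v) = 1 + \lam\reg''(v;a) > 0$ for $v > 0$, so $g$ is strictly convex on $[0,\infty)$, and therefore $g'$ is strictly increasing; since $g'(0^+) = -\norm{\y}_2 + \lam\reg'(0^+) = \lam - \norm{\y}_2 > 0$ by hypothesis, we get $g'(v) > 0$ for all $v > 0$, whence $g$ is strictly increasing on $[0,\infty)$ and $g(v) > g(0)$ for every $v > 0$. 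Combining with the displayed inequality yields $H(\x) > H(\0)$ for all $\x \neq \0$, so $\0$ is the unique minimizer.

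The main obstacle I anticipate is the careful handling of the non-differentiability of $\reg$ (and hence $H$) at the origin: the quantity $\reg'(0^+)$ is only a one-sided derivative, so the ``stationarity at $\0$'' argument must be phrased via the one-sided/directional derivative $g'(0^+)$ or the subdifferential rather than by naively setting a gradient to zero. The reduction $\x = v\e$ sidesteps most of this by turning the problem into a one-dimensional one where only the right derivative at $0$ matters, and the Cauchy--Schwarz step ensures the bound is uniform over directions $\e$; the remaining care is just to note that equality in Cauchy--Schwarz (when $\e$ is aligned with $\y$) does not help the adversary because the scalar bound $g(v) > g(0)$ is already strict for $v > 0$. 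Finally, I would remark that Proposition~\ref{prop:scalar_thresh} is the special case $K = 1$, justifying the earlier forward reference.
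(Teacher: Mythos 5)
Your proposal is correct, but it takes a genuinely different route from the paper's. The paper argues via subdifferential calculus: using $\reg'(0^+)=1$ it identifies $\partial H(\0) = \{\lam\,\bv - \y : \norm{\bv}_2 \leq 1\}$ and invokes the optimality condition $\0 \in \partial H(\x\opt)$, which settles both directions at once ($\0$ is the minimizer precisely when $\norm{\y}_2 \leq \lam$, and is not when $\norm{\y}_2 > \lam$), thereby pinning $\lam$ as the exact threshold. You instead reduce to a one-dimensional problem: writing $\x = v\,\e$, bounding $\y\tr\e \leq \norm{\y}_2$ by Cauchy--Schwarz, and studying $g(v) = \half v^2 - v\norm{\y}_2 + \lam\reg(v;a)$, whose right derivative at zero is $\lam - \norm{\y}_2 > 0$; monotonicity of the right derivative of the convex $g$ then gives $g(v) > g(0)$ for $v>0$, hence $H(\x) > H(\0)$ for all $\x \neq \0$. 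This is a more elementary argument that avoids subdifferentials entirely, using only one-sided derivatives and the strict convexity of $B$ inherited from $H$ via Proposition~\ref{prop:cvxcond}; what it buys is self-containedness, while the paper's subgradient computation buys the exact two-sided threshold characterization in one stroke. Two points to tighten: (i) strict convexity of $B$ does not literally yield $1+\lam\reg''(v;a)>0$ pointwise on $\prl$, but you do not need that --- convexity of $g$ together with $g'(0^+)=\lam-\norm{\y}_2>0$ already forces $g'(v) \geq g'(0^+) > 0$ for $v>0$, which is all your argument uses, so state it that way and delete the self-correcting digression about concavity; (ii) your argument establishes only the \emph{if} half, so to justify the closing claim that $\lam$ is the threshold (as the paper does), add the converse: if $\norm{\y}_2 > \lam$, take $\x = t\,\y/\norm{\y}_2$ and note that the right derivative of $\half(\norm{\y}_2 - t)^2 + \lam\reg(t;a)$ at $t=0^+$ equals $\lam - \norm{\y}_2 < 0$, so $\0$ is not the minimizer.
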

\begin{proof}
We consider the subgradient  of a convex function $H$ (see Ref.~\cite[Definition~1.1.4, page~165]{Hiriart_2001}), denoted by $\partial H(\x)$, is equal to $\x - \y + \partial \reg(\norm{\x}_2)$.
Since $\reg'(0^+) = 1$ by assumption~6 in Sec.~\ref{sec:pen}, we have $\partial \reg(\norm{\0}_2) = \partial \{ \|\0\|_2 \}$, which is equal to $\{\bv\in\RR^K,\; \norm{\bv}_2 \leq 1\}$.

This leads to
\begin{equation}
	\partial H(\0) = \{\lam\,\bv - \y:\; \norm{\bv}_2 \leq 1 \}.
\end{equation}
Since $\x^*$ is a minimizer of $H$ iff $\0 \in \partial H(\x^*)$ (see \cite[Theorem~2.2.1, page 177]{Hiriart_2001}), we deduce the following.
\begin{itemize}
	\item
	Suppose $\norm{\y}_2\leq \lam$.
	We can choose $\bv = \y/\lam$ satisfying $\norm{\bv}_2 \leq 1$ such that $\lam\,\bv - \y = \0$.
	We have $\0\in \partial H(\0)$, which implies that $\0$ is the minimizer of $H$. 
	\item
	Suppose $\norm{\y}_2 > \lam$.
	There is no $\bv$ satisfying $\norm{\bv}_2 \leq 1$ such that $\lam\,\bv - \y  = \0$.
	Hence, $\0$ is not the minimizer of $H$.
\end{itemize}
From the arguments above, we conclude that $\lam$ defines the threshold of $\theta$.
\end{proof}

When $ \reg $ is the absolute value function, 
the induced multivariate threshold function $ \tf $
can be expressed in closed form \cite{SenSel-02-tsp}.
(Essentially, it performs soft-thresholding on the 2-norm.)
A generalization to the case where the data consistency
term in \eqref{eq:mdeftf} is of the form $\norm{\y-\A\x}_2^2$ has also been
addressed \cite{Tibau_2011_SPL}.
We note that neither \cite{Tibau_2011_SPL} nor \cite{SenSel-02-tsp}
consider either non-convex regularization or overlapping group sparsity.

If the penalty function, $ \reg $, is strictly concave on the positive real line (log, atan, etc.),
then the induced multivariate threshold function results in less bias
of large magnitude components; i.e., $ \tf(\y) $ approaches the identity
function for large $\y$.
An exploration along these lines is given in \cite{Selesnick_2008_TSP_MLap};
however, in that work, the non-convexity was quite mild and not adjustable.
(The non-convex regularization in \cite{Selesnick_2008_TSP_MLap} is based
on the multivariate Laplace probability density function, which does not have a 
shape parameter, analogous to $ a $ in the current work.)
Furthermore, overlapping group sparsity is not considered in \cite{Selesnick_2008_TSP_MLap}.

\subsection{Overlapping Group Thresholding}

Using the results above, 
we can find a condition on $ a $ to ensure $ F $ in \eqref{eq:defF} is strictly convex.
The result permits the use of non-convex regularization 
to strongly promote group sparsity while preserving strict convexity of the total cost function, $ F $.

\begin{thm}
\label{prop:OGcvxcond}
Consider  $ F : \RR^N \to \RR $, defined as
\begin{equation}
	\label{eq:defFB}
	F(\x) = \half \norm{ \y - \x}_2^2  + \lambda \sum_i \reg(\norm{\x_{i,K}}_2; a)  
\end{equation}
where
$\y \in \RR^N $, 
$ K \in \ZZ_+$,
$ \lam > 0 $,
and
$ \reg(\cdot, a) : \RR \to \RR $ satisfies the assumptions in Sec.~\ref{sec:pen}.
Then $ F $ is strictly convex if
\begin{equation}
	\label{eq:OGregdplam}
	\reg''(0^+; a) > -\frac{1}{K\lam}.
\end{equation}
\end{thm}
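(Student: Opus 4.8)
The plan is to reduce the convexity of the overlapping-group cost $F$ to the single-group analysis already established in Proposition~\ref{prop:cvxcond}. As in the proof of Proposition~\ref{prop:cvxcond}, it suffices to examine the ``pure'' part of $F$ obtained by discarding the terms $\half\norm{\y}_2^2$ and $-\y\tr\x$, since these do not affect strict convexity; write
\begin{equation*}
	A(\x) = \half \norm{\x}_2^2 + \lambda \sum_i \reg(\norm{\x_{i,K}}_2; a),
\end{equation*}
so that $F$ is strictly convex iff $A$ is. The key structural observation is that the quadratic term $\half\norm{\x}_2^2 = \half\sum_{i} |x(i)|^2$ can be split and redistributed across the groups: since each index $i$ belongs to exactly $K$ of the (fully overlapping) windows $\x_{j,K}$, we can write $\half\sum_i |x(i)|^2 = \frac{1}{2K}\sum_j \norm{\x_{j,K}}_2^2$ (taking care at the boundaries, where out-of-range samples are zero and so contribute nothing). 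Hence
\begin{equation*}
	A(\x) = \sum_j \Bigl[ \tfrac{1}{2K}\norm{\x_{j,K}}_2^2 + \lambda \reg(\norm{\x_{j,K}}_2; a)\Bigr].
\end{equation*}

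Next I would show each summand is a convex function of $\x$. Fix a group index $j$ and consider $g_j(\x) = \tfrac{1}{2K}\norm{\x_{j,K}}_2^2 + \lambda\reg(\norm{\x_{j,K}}_2; a)$; this is the composition of the linear map $\x \mapsto \x_{j,K}$ with the function $\bv \mapsto \tfrac{1}{2K}\norm{\bv}_2^2 + \lambda\reg(\norm{\bv}_2;a)$ on $\RR^K$. The latter is exactly the function $H$ of Proposition~\ref{prop:cvxcond} but with $\half\norm{\cdot}_2^2$ replaced by $\tfrac{1}{2K}\norm{\cdot}_2^2$; equivalently it is $\frac{1}{K}$ times a function of the form $\half\norm{\cdot}_2^2 + (K\lambda)\reg(\norm{\cdot}_2;a)$. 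Applying Proposition~\ref{prop:cvxcond} with the regularization weight $K\lambda$ in place of $\lambda$, this single-group function is strictly convex provided $\reg''(0^+;a) > -\tfrac{1}{K\lambda}$, which is precisely hypothesis \eqref{eq:OGregdplam}. Composition of a (strictly) convex function with a linear map is convex, so under \eqref{eq:OGregdplam} each $g_j$ is convex, and $A = \sum_j g_j$ is a sum of convex functions.

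Finally I would upgrade convexity of the sum to strict convexity. A sum of convex functions is strictly convex as soon as one can rule out a common direction of affinity; here the cleanest route is to note that $A(\x) - \frac{1}{2}\norm{\x}_2^2 \cdot \epsilon$ stays convex for small $\epsilon>0$, or more directly: for each fixed coordinate direction $\e_i$, the window containing index $i$ centered appropriately gives a summand $g_j$ that is strictly convex along that direction (because the inner $\RR^K$-function is strictly convex, by the second part of Proposition~\ref{prop:cvxcond}, and its restriction to a line through a point where that coordinate is free remains strictly convex). Summing, $A$ is strictly convex along every line, hence strictly convex, hence so is $F$. The main obstacle I anticipate is the bookkeeping in the strict-convexity step and at the boundary: one must verify that the redistribution identity $\half\sum_i|x(i)|^2 = \frac{1}{2K}\sum_j\norm{\x_{j,K}}_2^2$ is exact given the zero-padding convention, and that a strictly convex single-group term is available to witness strictness in each coordinate direction even near the signal boundary — the mere convexity of the composition with a rank-deficient linear map is not itself strict, so the argument must be carried out carefully rather than invoked as a black box.
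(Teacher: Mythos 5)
Your proof takes essentially the same route as the paper's: distribute the quadratic data term equally among the $K$ overlapping groups containing each sample (the factor $\tfrac{1}{2K}$), reduce each single-group summand to Proposition~\ref{prop:cvxcond} with $\lambda$ replaced by $K\lambda$, and conclude strict convexity of the sum of group terms. Your added care about why strictness survives composition with the rank-deficient selection maps is a detail the paper's proof glosses over, but the decomposition and the key lemma invoked are the same.
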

\begin{proof}
Write $ F $ as
\begin{equation}
	F(\x) = \sum_i F_i(\x_{i,K})
\end{equation}
where $ F_i : \RR^K \to \RR $ is defined as
\begin{equation}
	\label{eq:defFi}
	F_i(\bv)
	= \frac{1}{2K} \norm{ \y_{i,K} - \bv}_2^2  + \lambda \reg(\norm{\bv}_2; a)
\end{equation}
for $ i \in \ZZ $.
Suppose \eqref{eq:OGregdplam} is satisfied.
Then by Prop.~\ref{prop:cvxcond} the functions $ F_i $ are strictly convex.
Since $F$ is a sum of strictly convex functions, $ F $ is strictly convex.
\end{proof}

\begin{cor}
\label{cor:OG}
Suppose $ \reg $ is one of the penalty functions given in Sec.~\ref{sec:pen} 
($  \reg\slog $,  $ \reg\satan $, $ \reg\srat $).
Then $ F $ is strictly convex if
\begin{equation}
	\label{eq:OG}	
	0 < a < \frac{1}{K\lam}.
\end{equation}
\end{cor}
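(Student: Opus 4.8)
The plan is to reduce the statement to Theorem~\ref{prop:OGcvxcond} by evaluating the right-sided second derivative of each penalty at the origin. Theorem~\ref{prop:OGcvxcond} already guarantees strict convexity of $F$ whenever $\reg''(0^+; a) > -\frac{1}{K\lam}$, so all that remains is to translate this inequality into an explicit interval for the parameter $a$ in each of the three cases.

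First I would invoke the identity \eqref{eq:zplus}, namely $\reg''\slog(0^+; a) = \reg''\satan(0^+; a) = \reg''\srat(0^+; a) = -a$, which was recorded earlier precisely for this purpose. (If one preferred a self-contained argument, one would differentiate each of \eqref{eq:log}, \eqref{eq:atan}, \eqref{eq:rat} twice on $\prl$ and take the limit $x \to 0^+$; this is a routine computation.) Substituting $\reg''(0^+; a) = -a$ into the hypothesis \eqref{eq:OGregdplam} of Theorem~\ref{prop:OGcvxcond} gives $-a > -\frac{1}{K\lam}$, i.e.\ $a < \frac{1}{K\lam}$.

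Finally I would combine this bound with the admissibility range of the parameter stated in Sec.~\ref{sec:pen}: the log and atan penalties require $a > 0$, and the rational penalty requires $a \geq 0$, so in all three cases the set $0 < a < \frac{1}{K\lam}$ is a valid (nonempty, since $K\lam > 0$) sub-interval on which the hypothesis of Theorem~\ref{prop:OGcvxcond} holds, and therefore $F$ is strictly convex. There is essentially no obstacle here; the only point requiring a little care is matching the open/closed endpoint conventions for $a$ with those in Sec.~\ref{sec:pen} and noting that the stated interval is nonempty.
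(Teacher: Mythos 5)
Your proposal is correct and matches the paper's (implicit) argument exactly: the paper obtains this corollary from Theorem~\ref{prop:OGcvxcond} by substituting $\reg''(0^+;a)=-a$ from \eqref{eq:zplus} into condition \eqref{eq:OGregdplam}, just as you do, with the positivity constraint on $a$ coming from the admissible parameter range in Sec.~\ref{sec:pen}. Nothing further is needed.
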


We give some practical comments on using \eqref{eq:OG} to set the parameters $\{K, \lam, a \}$.
We suggest that $ K $ be chosen first, based on the structural properties of the signal to be denoised.
We suggest that $ a $ then be set to a fixed fraction of its maximal value; i.e., fix $ \beta \in [0, 1]$ and set $ a = \beta/(K \lam) $.
So, we consider $ a $ as a function of $ \lam $.
We then set $ \lam $ according to the noise variance.
In Sec.~\ref{sec:setlam}, we describe two approaches for the selection of $ \lam $.
In our numerical experiments on speech enhancement, we have found that setting $ a $ to its maximal 
value
of $ 1/(K \lam) $
generally yields the best results; i.e., $ \beta = 1 $.
Hence, in the examples in Sec.~\ref{sec:examples}, we set $ a $ to its maximal value.

Equation \eqref{eq:OG} may suggest the proposed method becomes ineffective for large $ K$.
It can be noted from \eqref{eq:OG} that
for large $ K $, $ a \lam $ should be small ($ < 1/K$).
If $ \lam $ is set so as to achieve a desired degree of noise suppression,
then \eqref{eq:OG} implies $ a $ should be small.
A small $ a $, in turn, limits the non-convexity of the regularizer.
Hence, 
it appears the benefit of the proposed non-convex regularization method is diminished for large $ K $.
However, two considerations offset this reasoning. 
First, for larger $ K $, a smaller value of $ \lam $ is needed so as to achieve a fixed level of noise suppression
(this can be seen, for example, in Table III).
Secondly, for larger $K$, there is greater overlap between adjacent groups
because the groups are fully-overlapping;
so, regularization may be more sensitive to $ a $.

\subsection{Minimization Algorithm}

To derive an algorithm minimizing the strictly convex function $ F $ in \eqref{eq:defF},
we use the majorization-minimization (MM) procedure \cite{FBDN_2007_TIP}
as in \cite{Chen_2014_OGS}.
The MM procedure replaces a single minimization problem by a sequence of (simpler) ones.
Specifically, MM is based on the iteration
\begin{equation}
	\label{eq:MM}
	\x\iter{k+1} = \arg \min_{\x} Q(\x, \x\iter{k})
\end{equation}
where the function, $ Q : \RR^N \times \RR^N \to \RR $, is a majorizer (upper bound) of $ F $
and
$ k $ is the iteration index.
For $ Q $ to be a majorizer of $ F $ 
it should satisfy
\begin{align}
	Q(\x, \bv) & \geq F(\x), \; \forall \x \in \RR^N
	\\
	Q(\bv, \bv) & = F(\bv). 
\end{align}
The MM procedure monotonically reduces the cost function value at each iteration.
Under mild conditions, the sequence $ \x\iter{k} $ converges to the minimizer of $ F $  \cite{FBDN_2007_TIP}.

To specify a majorizer of the cost function $ F $ in \eqref{eq:defF}, we first specify a majorizer 
of the penalty function, $ \reg $.
To simplify notation, we suppress the dependence of $ \reg $ on $ a $.

\begin{figure}
	\centering
	\includegraphics{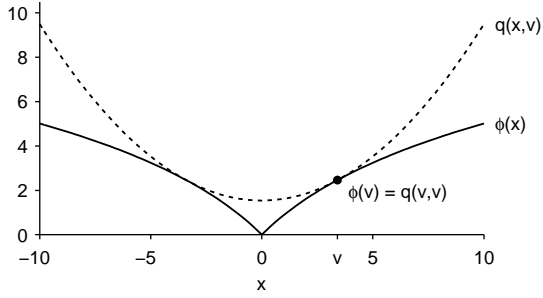}
	\caption{
		Majorization of non-convex $ \reg(x) $ by $ q(x,v) $.
	}
	\label{fig:major}
\end{figure}

\begin{lemma}
\label{lemma:major}
Assume $ \reg : \RR \to \RR $ satisfies the assumptions in Sec.~\ref{sec:pen}.
Then $ q : \RR \times \RR \to \RR $, defined by
\begin{equation}
	\label{eq:qmaj}
	q(x, v) = \frac{1}{2 v} \reg'(v) \, x^2 + \reg(v) - \frac{v}{2} \reg'(v),
\end{equation}
is a majorizer of $ \reg $ except for $ v = 0 $,
i.e.,
\begin{align}
	\label{eq:majA}
	q(x, v) & \geq \reg(x), \ \forall x \in \RR, \ \forall v \in \RNZ
	\\
	\label{eq:majB}
	q(v, v) & = \reg(v),  \ \forall v \in \RNZ
\end{align}
\end{lemma}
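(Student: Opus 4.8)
The plan is to verify the two defining properties of a majorizer directly, using the concavity assumption (assumption 5) on $\reg$ over $\prl$ together with symmetry (assumption 3). The tangency condition \eqref{eq:majB} is immediate: substituting $x = v$ into \eqref{eq:qmaj} gives $q(v,v) = \tfrac{1}{2v}\reg'(v)v^2 + \reg(v) - \tfrac{v}{2}\reg'(v) = \reg(v)$ for any $v \neq 0$. So the real content is the upper-bound inequality \eqref{eq:majA}, and I would handle it as follows.

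First I would reduce to the case $x \ge 0$, $v > 0$. Because $\reg$ is symmetric and $q(x,v)$ depends on $x$ only through $x^2$ and on $v$ only through $v\,\reg'(v)$ and $\reg(v)$ (and $v\,\reg'(v)$, $\reg(v)$ are themselves even in $v$ since $\reg$ is even), we have $q(x,v) = q(\abs{x}, \abs{v})$ and $\reg(x) = \reg(\abs{x})$; hence it suffices to prove $q(x,v) \ge \reg(x)$ for $x \ge 0$ and $v > 0$. Next, the natural trick is a change of variables $u = x^2$: define $g(u) = \reg(\sqrt{u})$ for $u \ge 0$. Then $\reg(x) = g(x^2)$, and for $x > 0$, $g'(x^2) = \reg'(x)/(2x)$, so $q(x,v)$ is exactly the first-order Taylor expansion of $g$ about the point $u = v^2$ evaluated at $u = x^2$: $q(x,v) = g'(v^2)(x^2 - v^2) + g(v^2)$. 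Therefore \eqref{eq:majA} is equivalent to the statement that $g$ lies below its tangent lines, i.e.\ that $g$ is concave on $\nnrl$.

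So the crux is: \emph{$g(u) = \reg(\sqrt{u})$ is concave on $(0,\infty)$.} I would compute $g''$ for $u > 0$: writing $t = \sqrt{u}$, one finds $g''(u) = \tfrac{1}{4u}\left(\reg''(t) - \reg'(t)/t\right)$. Since $t > 0$, assumption 5 gives $\reg''(t) \le 0$, and assumption 4 gives $\reg'(t) > 0$, so $\reg'(t)/t > 0$; hence $g''(u) < 0$ on $(0,\infty)$, establishing concavity there. A standard limiting/continuity argument extends the tangent-line inequality to $u = 0$ (i.e.\ $x = 0$): $g$ is continuous at $0$ since $\reg$ is continuous, so concavity on $(0,\infty)$ plus continuity at $0$ gives $g(0) \le g'(v^2)(0 - v^2) + g(v^2)$, which is the $x = 0$ case of \eqref{eq:majA}. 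Assembling these pieces — reduction by symmetry, the substitution $u = x^2$, the sign of $g''$, and the tangent-line characterization of concavity — yields \eqref{eq:majA}, and together with the already-checked \eqref{eq:majB} completes the proof.

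The main obstacle is recognizing that the correct variable for exposing concavity is $x^2$ rather than $x$ itself: $\reg$ is concave in $x$ on $\prl$, but it is the concavity of $x \mapsto \reg(\sqrt{\,\cdot\,})$ — equivalently the combined condition $\reg''(t) \le \reg'(t)/t$ — that makes $q$ a valid quadratic majorizer. Once the substitution is in place the computation of $g''$ and the invocation of assumptions 4 and 5 are routine; the only mild care needed is the boundary behavior at $x = 0$, where $\reg$ is non-differentiable but $q(\cdot,v)$ is smooth, handled cleanly by the continuity/limit argument above.
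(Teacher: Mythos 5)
Your proof is correct, but it takes a genuinely different route from the paper's. The paper proves \eqref{eq:majA} in two elementary steps: first, concavity of $\reg$ on $\prl$ (via Taylor's theorem with a nonpositive second-order remainder) gives the tangent-line bound $\reg(x) \leq \reg(v) + \reg'(v)(x-v)$ for $x \geq 0$, $v > 0$; second, since $\reg'(v) > 0$, the scalar inequality $x \leq \frac{1}{2v}x^2 + \frac{v}{2}$ (equivalent to $(x-v)^2 \geq 0$) is substituted into that linear bound to produce the quadratic $q(x,v)$, with symmetry of $q$ and $\reg$ covering the remaining cases. You instead change variables to $u = x^2$, observe that $q(\cdot,v)$ is exactly the first-order (tangent-line) expansion at $u = v^2$ of $g(u) = \reg(\sqrt{u})$, and reduce \eqref{eq:majA} to concavity of $g$ on $(0,\infty)$, verified from the sign of $g''(u) = \frac{1}{4u}\bigl(\reg''(t) - \reg'(t)/t\bigr)$ with $t = \sqrt{u}$, using assumptions 4 and 5; the boundary case $x = 0$ is handled by continuity. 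Both arguments rest on the same hypotheses, but yours isolates the precise structural condition behind the quadratic majorizer --- concavity of $\reg$ in the variable $x^2$, i.e.\ $\reg''(t) \leq \reg'(t)/t$ on $\prl$ --- which is strictly weaker than requiring assumptions 4 and 5 separately, and it makes the tangency \eqref{eq:majB} transparent (the majorizer is a tangent in the transformed variable). The paper's version is more elementary in that it avoids the change of variables and any differentiation of the composite $g$. Your limiting argument at $x = 0$ is also slightly more careful than the paper's direct appeal to Taylor's theorem on an interval with endpoint $0$, where $\reg$ is only one-sidedly differentiable.
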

The majorization of $ \reg(x) $ by $ q(x, v) $ is illustrated in Fig.~\ref{fig:major}.
\begin{proof}
By direct substitution, one may verify \eqref{eq:majB}.
We now show \eqref{eq:majA}.
Let $v > 0$ and $x \geq 0$.
Using Taylor's theorem \cite[Theorem~5.15]{Rudin_1976},
we have
\begin{equation}
	\label{eq:majproof1}
	\reg(x) = \reg(v) + \reg'(v)(x-v) + \frac{\reg''(v_0)}{2}(x-v)^2
\end{equation}
for some $v_0$ between $x$ and $v$.
By the assumptions on $ \reg $, we have  $\reg''(v_0) < 0$.
Hence from \eqref{eq:majproof1},
\begin{equation}
	\label{eq:majproof2}
	\reg(x) \leq \reg(v) + \reg'(v)(x-v).
\end{equation}
Note that  $(x - v)^2\geq 0$ implies
\begin{equation}
	\label{eq:majproof3}
	x \leq \frac{1}{2v}x^2 + \frac{v}{2}.
\end{equation}
Using \eqref{eq:majproof3} for $ x $ on the left-hand side of \eqref{eq:majproof2} gives
\begin{equation}
	\label{eq:majproof4}
	\reg(x) \leq \reg(v) + \reg'(v) \Bigl( \frac{1}{2v}x^2 + \frac{v}{2} - v \Bigr).
\end{equation}
Recognizing that the right-hand side of \eqref{eq:majproof4} is $ q(x, v) $, 
we obtain $ \reg(x) \leq q(x, v) $ 
for all $ x \geq 0 $, $ v > 0 $.
By symmetry of $ q $ and $ \reg $, 
we obtain \eqref{eq:majA}.
\end{proof}

Since $ q $ is a majorizer of $ \reg $,  the function
\begin{equation}
	Q(\x, \bv) = \half \norm{ \y - \x}_2^2  + \lambda \sum_i q(\norm{\x_{i,K}}_2, \norm{\bv_{i,K}}_2)
\end{equation}
is a majorizer of $ F $.
Using \eqref{eq:qmaj},  the function $ Q $ is given by
\[
	Q(\x, \bv) = \half \norm{ \y - \x}_2^2   + \frac{\lam}{2} \sum_i  \frac{\reg'(\norm{\bv_{i,K}}_2)}{\norm{\bv_{i,K}}_2}   \norm{\x_{i,K}}_2^2 + C
\]
where $ C $ does not depend on $ \x $.
After algebraic manipulations, $ Q $ can be expressed as
\begin{equation}
	\label{eq:Qfun}
	Q(\x, \bv) = \half \norm{ \y - \x}_2^2   + \frac{\lam}{2} \sum_i  r(i; \bv) \,  x^2(i) + C
\end{equation}
where $ r : \ZZ \times \RR^K \to \RR $ is defined as
\begin{equation}
	\label{eq:riterm}
	r(i; \bv) = \sum_{j=0}^{K-1}  \frac{\reg'(\norm{\bv_{i-j, K}}_2)}{ \norm{\bv_{i-j, K}}_2 }.
\end{equation}
Note that the components $ x(i) $ in  \eqref{eq:Qfun} are uncoupled.
Furthermore, $ Q $ is quadratic in $ x(i) $.
Hence, the minimizer of $ Q $ with respect to $ \x $ is easily obtained.
The quantities $ r(i, \bv) $ in \eqref{eq:riterm} are readily computed;
$r$ is essentially a double $K$-point convolution, with a nonlinearity 
between the two convolutions.

\begin{table}
	\caption{
		\label{table:ogs}
		Overlapping group shrinkage (OGS) with penalty $\reg$.
	}
\begin{center}
\begin{minipage}{2.7in}
\hrule
\begin{align*}
	&	\text{input: $\y \in \RR^N $, $ \lambda > 0 $, $ K $, $ \reg $}
	\\
	&
		\x = \y	\qquad \quad \text{(initialization)}
	\\
	&
		\NZ =  \{ i \in \ZZ_N :   y(i) \neq 0 \}  
	\\
	& \text{repeat}
	\\
	&	\qquad
			a(i) = \left[	\sum_{k=0}^{K-1} \abs{x(i+j)}^2 \right]^{1/2} \!,
			\ i \in \NZ
	\\[1em]
	&	\qquad
			b(i) = \frac{\reg'(a(i))}{a(i)}, \quad i \in \NZ
	\\[1em]
	&	\qquad
			r(i) = \sum_{j = 0}^{K-1}  b(i-j), \quad \ i \in \NZ
	\\[1em]
	&	\qquad
			x(i) = \frac{y(i)}{1 + \lam \, r(i)}, \quad i \in \NZ 
	\\[1em]
	&	\qquad
		\NZ =  \{ i \in \ZZ_N : \abs{x(i)}  >  \eps \}  \hspace{2em} (*)
	\\
	& \text{until convergence}
	\\
	& \text{return: $\x$}
\end{align*}
\hrule
$(*)$ For finite precision implementations only.
\end{minipage}
\end{center}
\end{table}

Using \eqref{eq:Qfun} in the MM iteration \eqref{eq:MM}, we obtain 
\begin{equation}
	\label{eq:ogsit}
	x\iter{k+1}(i) = \frac{y(i)}{1 + \lam \, r(i; \x\iter{k}) },
	\quad
	i \in \ZZ_N,
\end{equation}
where $ r $ is given by \eqref{eq:riterm}.
This constitutes the OGS algorithm. 
The algorithm is summarized in Table~\ref{table:ogs}.
We denote the output of the OGS algorithm 
as $ \y = \ogs(\x; \lam, K, \reg) $.

Note that $ q $ in \eqref{eq:qmaj} is undefined if $ v = 0 $.
This singularity issue often arises when a quadratic function is used to 
majorize a non-smooth function \cite{Oliveira_2009_SP, FBDN_2007_TIP}.
This issue may manifest itself in the OGS algorithm whenever a $K$-point
group of $ \x $ is equal to the $K$-point zero vector;
i.e., if $ \x_{i,K}\iter{k} = \0 \in \RR^K $ for some index $ i $ and iteration $k$.
In the event of such an occurrence, the OGS algorithm would encounter a `divide-by-zero' error.
However, such an occurrence is guaranteed not to occur with suitable initialization,
as described in \cite{Chen_2014_OGS}.
For example, it is sufficient to initialize all $x(i)$ to non-zero values, i.e., $x\iter{0}(i) \neq 0 $ for all $ i \in \ZZ_N $.
With such an initialization, it is readily observed that 
$ r(i; \x\iter{k}) $ in the denominator of \eqref{eq:ogsit} is strictly positive and finite
and that $ x\iter{k}(i) \neq 0 $
for all $ i \in \ZZ_N $ and all iterations $k$.
When some components of the solution $\x\opt$ are zero (as expected, due to sparse regularization),
those values $x\iter{k}(i)$ approach zero in limit; 
i.e., $ x\iter{k}(i) \to 0 $ as $ k \to \infty $.

We propose initializing $ \x $ to $ \y $; i.e.,  $ \x\iter{0} = \y $,
and we exclude from the iteration \eqref{eq:ogsit} those $ i $ for which $ y(i) = 0 $.
The set $ S \subset \ZZ_N $ in Table~\ref{table:ogs}
serves to exclude these components from the iterative update.
In this case, $ x\iter{k}(i) = 0 $ for all iterations $k$, which is justified by part 1) of lemma \ref{lemma:minprop}.
As a consequence of lemma \ref{lemma:minprop},
initializing $ x\iter{0}(i) $ to zero for $ i \notin S $ is optimal.
Therefore, the algorithm excludes these values from the update procedure because they are already optimal.

With the initialization $ \x\iter{0} = \y $, it is readily observed, as above, that 
$ r(i; \x\iter{k}) $ in the denominator of \eqref{eq:ogsit} is strictly positive and finite
and that $ x\iter{k}(i) \neq 0 $
for all $ i \in S $ and all iterations $k$.
Assuming infinite precision, it is sufficient to define $ S $ prior to the loop only;
the last line in Table~\ref{table:ogs}, indicated by $(*)$, can be omitted.
It is guaranteed that a division by zero will never occur, as discussed above.

The OGS algorithm proceeds by gradually attenuating the $x(i)$, $ i \in S $, toward their optimal values (including zero).
The attenuation is multiplicative, so the the value never equals zero, even though it may converge to zero.
But if many values reach `machine epsilon'  
then a divide by zero may subsequently occur in the implementation. 
Hence, to avoid possible divide-by-zero errors due to finite precision arithmetic,
the OGS algorithm updates $ S $ at the end of the loop in Table~\ref{table:ogs}.
The small number, $ \eps $, may be set to `machine epsilon', which for single precision floating point is about $ 10^{-16} $.
This value is usually considered the same as zero.

We do not prove the convergence of the OGS algorithm to the minimizer of $ F $
due to the complication of the singularity issue.
However, due to its derivation based on the majorization-minimization principle,
OGS is guaranteed to decrease the cost function at each iteration.
Moreover, in practice, we have observed through extensive numerical investigation, 
that the algorithm has the same rapid convergence behavior
as convex regularized OGS \cite{Chen_2014_OGS}.

Note that in the OGS algorithm, summarized in Table~\ref{table:ogs},  the penalty function appears in only one place:
the computation of $b(i)$.
It can therefore be observed that the role of the penalty is encapsulated by the function $ \reg'(u)/u $.
Table \ref{table:wts} lists this function
for the penalty functions given in Sec.~\ref{sec:pen}.
The function $ \reg'(u)/u $ have very similar functional forms.
The similarity of these functions reveal the close relationship among the listed penalty functions.

\begin{table}
\caption{Sparse penalties and corresponding nonlinearities
\label{table:wts}}
\centering
\begin{tabular}{@{}lll@{}}
	\toprule
	penalty	& $\displaystyle \reg(u)$ 		& $ \displaystyle \reg'(u)/u $ 
	\\
	\midrule
	abs	&
   	$\displaystyle \abs{u}$ \hspace{1em}		&	$ \displaystyle \frac{1}{\abs{u}}$
	\\[1.5em]
	log	&
	$ \displaystyle \frac{1}{a} \log(1 + a\abs{u})$
	&	$ \displaystyle \frac{1}{ \abs{u} (1 + a \abs{u}) } $
	\\[1.5em]
	atan	&
   $ \displaystyle	 \frac{2}{a\sqrt{3}}   \!  \left(  \!
	 	\tan\inv \! \left( \frac{1+2 a \abs{u}}{\sqrt{3}}\right)
		-
		\frac{\pi}{6} \!
	\right)
$
	&	$  \displaystyle  \frac{1}{  \abs{u} (1 + a \abs{u} + a^2 \abs{u}^2) } $
	\\[1.5em]
	rational	&
	$ \displaystyle \frac{\abs{u}}{1 + a \abs{u} / 2} $
	&	$ \displaystyle \frac{1}{  \abs{u} (1 + a \abs{u})^2 } $
   \\
   \bottomrule
\end{tabular}
\end{table}

\subsection{The Multidimensional Case}

The results and algorithm described in the preceding sections
can be extended to the multidimensional case straightforwardly.
In the numerical experiments below,
we use a two-dimensional version of the algorithm in order to
denoise the time-frequency spectrogram of a noisy speech waveform.

Suppose $ \x $ is a 2D array of size $ N_1 \times N_2 $; i.e., 
\[
	\x = \{ x(i_1, i_2), \  0 \leq i_1 \leq N_1-1, \  0 \leq i_2 \leq N_2-1 \}.
\]
The array can be expressed using multi-indices as
\[
	\x = \{ x(i), \ i \in \ZZ_{N_1} \! \times \ZZ_{N_2} \}.
\]
Let $ K = (K_1, K_2) $ denote the size of a 2D group.
Then a sub-group of size $ K $ can be expressed as
\[
	\x_{i, K} = \{ x(i+j), \ j \in \ZZ_{K_1} \! \times \ZZ_{K_2} \}.
\]
In the two-dimensional case, the function $ F $ in \eqref{eq:defF} is 
\begin{equation}
	\label{eq:defFB2D}
	F(\x) = \sum_{i \in \ZZ^2} \, \half \abs{y(i) - x(i)}^2
	+
	\lam \, \reg(\norm{\x_{i,  K}}_2; a),
\end{equation}
and  conditions 
\eqref{eq:OGregdplam}
and
\eqref{eq:OG}	
become 
\begin{equation}
	\label{eq:OGregdplam2}
	\reg''(0^+; a) > -\frac{1}{K_1 K_2 \lam}
\end{equation}
and 
\begin{equation}
	\label{eq:OG2}	
	0 < a < \frac{1}{K_1 K_2 \lam}
\end{equation}
respectively.
The algorithm in Table~\ref{table:ogs} is essentially the same
for the two-dimensional case. 
The summations become double summations, etc.
Extensions to higher dimensional signals is similarly straight forward.

\subsection{Regularization Parameter Selection}

\label{sec:setlam}

\noindent
\textbf{Noise level suppression.}
The regularization parameter, $ \lam $, can be selected using existing generic techniques such as the L-curve method.
However, in \cite{Chen_2014_OGS} we described an approach to set $ \lam $ based directly on the standard deviation, $\sigma$, of the AWGN, which we assume is known. This approach seeks to preserve one of the concepts of scalar thresholding (e.g., hard or soft thresholding), namely the processing of signal values based on relative magnitude.  Consider the problem of estimating a sparse signal in AWGN.
If many of the non-zero values of the sparse signal exceed the noise floor,
then a suitable threshold value, $T$, should exceed the noise floor.
But $ T $ should not be too large, or else the non-zero values of the sparse signal will be annihilated.
Hence, it is reasonable to use the value $ T = 3\sigma $.
This threshold will set most of noise (about 99.7\%) to zero.
(If the sparse signal has non-zero values less than $ T $ in magnitude, then those values will be lost.)

\begin{table*}
\caption{
	OGS regularization parameter with penalty $\reg(\cdot) = \reg\satan(\cdot,1/(K_1 K_2 \lam))$ and 25 iterations
	\label{table:lam}
}
\centering
\begin{tabular}{@{}lccccc@{}}
\toprule
$K$        &  \multicolumn{5}{c}{    $\lam$, $\alpha(\lam, K, \reg)$      } \\
\midrule
$ 1\times  1$ & $ 4.25,\,1.00\cdot 10^{ -2}$ & $ 4.59,\,4.33\cdot 10^{ -3}$ & $ 4.93,\,1.51\cdot 10^{ -3}$ & $ 5.27,\,4.05\cdot 10^{ -4}$ & $ 5.61,\,1.00\cdot 10^{ -4}$\\
$ 1\times  2$ & $ 2.14,\,1.00\cdot 10^{ -2}$ & $ 2.31,\,4.35\cdot 10^{ -3}$ & $ 2.48,\,1.49\cdot 10^{ -3}$ & $ 2.64,\,3.99\cdot 10^{ -4}$ & $ 2.81,\,1.00\cdot 10^{ -4}$\\
$ 1\times  3$ & $ 1.45,\,1.00\cdot 10^{ -2}$ & $ 1.56,\,4.52\cdot 10^{ -3}$ & $ 1.68,\,1.56\cdot 10^{ -3}$ & $ 1.79,\,4.06\cdot 10^{ -4}$ & $ 1.91,\,1.00\cdot 10^{ -4}$\\
$ 1\times  4$ & $ 1.11,\,1.00\cdot 10^{ -2}$ & $ 1.20,\,4.47\cdot 10^{ -3}$ & $ 1.29,\,1.58\cdot 10^{ -3}$ & $ 1.38,\,4.11\cdot 10^{ -4}$ & $ 1.47,\,1.00\cdot 10^{ -4}$\\
$ 1\times  5$ & $ 0.91,\,1.00\cdot 10^{ -2}$ & $ 0.98,\,4.37\cdot 10^{ -3}$ & $ 1.05,\,1.55\cdot 10^{ -3}$ & $ 1.13,\,4.07\cdot 10^{ -4}$ & $ 1.20,\,1.00\cdot 10^{ -4}$\\
$ 2\times  2$ & $ 1.08,\,1.00\cdot 10^{ -2}$ & $ 1.16,\,4.37\cdot 10^{ -3}$ & $ 1.24,\,1.47\cdot 10^{ -3}$ & $ 1.33,\,3.95\cdot 10^{ -4}$ & $ 1.41,\,1.00\cdot 10^{ -4}$\\
$ 2\times  3$ & $ 0.73,\,1.00\cdot 10^{ -2}$ & $ 0.79,\,4.41\cdot 10^{ -3}$ & $ 0.85,\,1.49\cdot 10^{ -3}$ & $ 0.90,\,3.96\cdot 10^{ -4}$ & $ 0.96,\,1.00\cdot 10^{ -4}$\\
$ 2\times  4$ & $ 0.56,\,1.00\cdot 10^{ -2}$ & $ 0.61,\,4.18\cdot 10^{ -3}$ & $ 0.65,\,1.44\cdot 10^{ -3}$ & $ 0.70,\,3.91\cdot 10^{ -4}$ & $ 0.74,\,1.00\cdot 10^{ -4}$\\
$ 2\times  5$ & $ 0.47,\,1.00\cdot 10^{ -2}$ & $ 0.50,\,3.89\cdot 10^{ -3}$ & $ 0.54,\,1.33\cdot 10^{ -3}$ & $ 0.58,\,3.74\cdot 10^{ -4}$ & $ 0.61,\,1.00\cdot 10^{ -4}$\\
$ 3\times  3$ & $ 0.50,\,1.00\cdot 10^{ -2}$ & $ 0.54,\,4.11\cdot 10^{ -3}$ & $ 0.58,\,1.38\cdot 10^{ -3}$ & $ 0.62,\,3.81\cdot 10^{ -4}$ & $ 0.66,\,1.00\cdot 10^{ -4}$\\
$ 3\times  4$ & $ 0.40,\,1.00\cdot 10^{ -2}$ & $ 0.43,\,3.57\cdot 10^{ -3}$ & $ 0.46,\,1.19\cdot 10^{ -3}$ & $ 0.49,\,3.51\cdot 10^{ -4}$ & $ 0.51,\,1.00\cdot 10^{ -4}$\\
$ 3\times  5$ & $ 0.34,\,1.00\cdot 10^{ -2}$ & $ 0.36,\,3.26\cdot 10^{ -3}$ & $ 0.39,\,1.04\cdot 10^{ -3}$ & $ 0.41,\,3.23\cdot 10^{ -4}$ & $ 0.43,\,1.00\cdot 10^{ -4}$\\
$ 4\times  4$ & $ 0.33,\,1.00\cdot 10^{ -2}$ & $ 0.35,\,3.24\cdot 10^{ -3}$ & $ 0.37,\,1.02\cdot 10^{ -3}$ & $ 0.39,\,3.16\cdot 10^{ -4}$ & $ 0.41,\,1.00\cdot 10^{ -4}$\\
$ 4\times  5$ & $ 0.29,\,1.00\cdot 10^{ -2}$ & $ 0.30,\,3.09\cdot 10^{ -3}$ & $ 0.32,\,9.61\cdot 10^{ -4}$ & $ 0.33,\,3.04\cdot 10^{ -4}$ & $ 0.35,\,1.00\cdot 10^{ -4}$\\
$ 5\times  5$ & $ 0.25,\,1.00\cdot 10^{ -2}$ & $ 0.26,\,3.05\cdot 10^{ -3}$ & $ 0.28,\,9.44\cdot 10^{ -4}$ & $ 0.29,\,3.01\cdot 10^{ -4}$ & $ 0.30,\,1.00\cdot 10^{ -4}$\\
$ 2\times  8$ & $ 0.33,\,1.00\cdot 10^{ -2}$ & $ 0.35,\,3.33\cdot 10^{ -3}$ & $ 0.37,\,1.05\cdot 10^{ -3}$ & $ 0.39,\,3.22\cdot 10^{ -4}$ & $ 0.41,\,1.00\cdot 10^{ -4}$\\
\bottomrule
\\
\multicolumn{6}{@{}l}{}
\end{tabular}
\end{table*}

This simplicity of this `three-sigma' rule can not be leveraged so easily in the proposed OGS algorithm.
However, we can still implement the concept of setting $\lam$ so as to reduce the noise down to a specified fraction of its original power.
For this purpose, the effect of the OGS algorithm on pure zero-mean Gaussian noise, $x(i) = \calN(0, \sigma^2)$, can be measured through computation.
In particular, the standard deviation of the OGS output as a function of $(\lam, K, \reg)$ can be found empirically and recorded.
For example, Table~\ref{table:lam} records the value
\begin{equation*}
	\alpha(\lam, K, \reg) = \frac{1}{\sigma} \std\bigl\{ \ogs(\x; \lam, K, \reg) \bigr\},
	\ x(i) = \calN(0, \sigma^2)
\end{equation*}
for several $ \lam $ and group sizes $ K $.
For this table we used the atan penalty function with $ a $ set to its maximum value of $1/(K\lam)$; i.e., 
$ \reg(\cdot) = \reg\satan(\cdot, 1/(K\lam)) $.
The value $ \alpha $ also depends on the number of iterations of the OGS algorithm.
In computing Table~\ref{table:lam} we have used a fixed number of 25 iterations.

We clarify how to use Table~\ref{table:lam} to set the regularization parameter:
Suppose in one-dimensional signal denoising, one seeks to set $ \lam $ so that
the OGS algorithm reduces $\sigma$ down to $ 10^{-4} \sigma $.
If one uses a group size of $ K = 5 $, the atan penalty function with $ a = 1/(5\lam) $, and 25 iterations,
then according to Table~\ref{table:lam}, one should use $ \lam = 1.2 \sigma $ (see the last column of the fifth row of the table).
For each group size $K$,
the table records a discrete set of $(\lam, \alpha)$ pairs for $ 10^{-4} < \alpha < 10^{-2} $.
Linear interpolation on a $\alpha$-logarithmic scale can be used to estimate $ \lam $ for other $\alpha$.
For example, if one seeks to set $\lam$ so that the OGS algorithm reduces $\sigma$
down to $ 10^{-3} \sigma $, then according to the interpolation illustrated in Fig.~\ref{fig:interp}, 
one should use $ \lam = 1.07 \sigma $.

\begin{figure}
	\centering
	\includegraphics{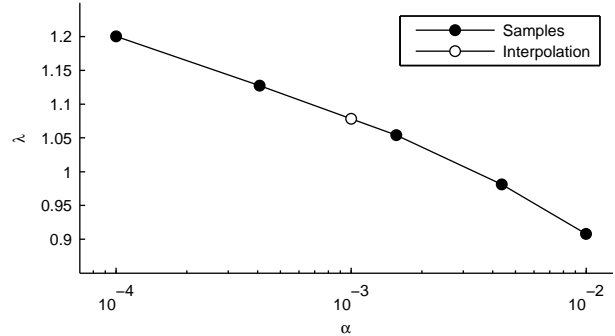}
	\caption{
		Solid dots indicate the values from Table~\ref{table:lam} for the group size $K=5$.
		The circle indicates the interpolated value at $\alpha = 10^{-3}$.
		\label{fig:interp}
	}
\end{figure}

To set $ \lam $ by this approach for other penalty functions, other values of $ a $,
and for complex data, 
it is necessary to compute additional tables.
We have precomputed a set of such tables to be available as supplementary material.
Using precomputed tables and interpolation, a suitable value for $ \lam $ can be found very quickly.
These tables assume the noise is AWGN; for other noise models, other tables need to be precomputed.
This approach is also effective for two-dimensional denoising (e.g., spectrogram denoising).

\smallskip
\noindent
\textbf{Monte-Carlo SURE.}
Another approach to select the regularization parameter, $\lam$, is based on minimizing the mean square error (MSE).
For the problem 
of denoising a signal in AWGN, the MSE is unknown in practice, due to the 
noise-free signal being unknown.
But, the MSE can be estimated using Stein's unbiased risk estimator (SURE) \cite{Stein1981a}.
To estimate the MSE, SURE requires only the observation $\y$, noise variance $\sigma^2$, and divergence of the estimator.
However, the computation of the divergence is intractable for many estimators, including OGS.
To overcome this issue, it is proposed in Ref.~\cite{Ramani_MonteCarloSure}
that Monte-Carlo methods be used. 
We have applied this approach, i.e., `Monte-Carlo SURE' (MC-SURE), to estimate the MSE
for complex-valued speech spectrogram denoising using OGS.
Since the spectrogram is complex, we calculate the MS-SURE MSE by averaging real and imaginary divergences as in \cite{Candes_SURE}.
Figure~\ref{fig:estimate_mse} illustrates both the MSE, as calculated by MC-SURE, and the true MSE,
as functions of $ \lam $.
The estimated MSE is quite accurate, and the MSE-optimal value of $ \lam $ is about 0.33.
However, a disadvantage of MC-SURE is its high computational complexity.
It requires two OGS optimizations for each $\lam$ to emulate the divergence.

It is noted in Ref.~\cite{Ramani_MonteCarloSure} that for non-smooth estimators, 
the MSE, as calculated by MC-SURE, tends to deviate randomly from the true MSE
(see Fig.~4 in \cite{Ramani_MonteCarloSure}).
For OGS, the MSE calculated by MC-SURE closely follows the true MSE, as illustrated in Fig~\ref{fig:estimate_mse},
which shows that OGS is close to continuous and bounded.

\begin{figure}
	\centering
	\includegraphics[width = 3.2in]{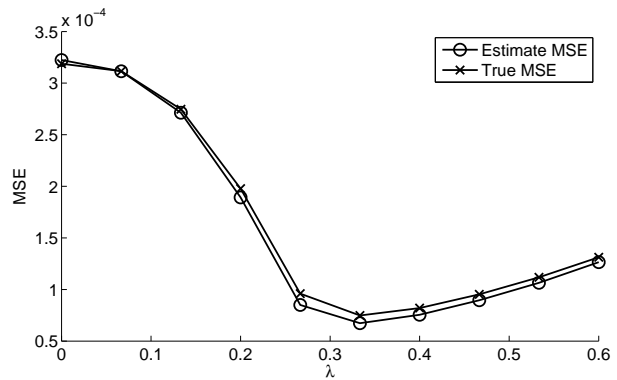}
	\caption{True MSE and  MSE calculated using Monte-Carlo SURE.
	\label{fig:estimate_mse}
	}
\end{figure}

\section{Experimental Results}
\label{sec:examples}

\subsection{Example 1: One-dimensional Signal Denoising}
\label{sec:Ex1}

This example compares the proposed non-convex regularized OGS algorithm
with the prior (convex regularized) version of OGS
and
with scalar thresholding.
The SNRs are summarized in Table~\ref{tab:snr_ogs1}.

Figure~\ref{fig:1Dsignals}a shows a synthetic group-sparse signal (same as in \cite{Chen_2014_OGS}).
The noisy signal, shown in Fig.~\ref{fig:1Dsignals}b, was obtained by adding white Gaussian noise (AWGN) with SNR of 10 dB.
For each of soft and hard thresholding, we used the threshold, $ T $, that maximizes the SNR. 
The SNR values are summarized in the top row of Table~\ref{tab:snr_ogs1}.

\begin{table}
\caption{
	Example 1. Output SNR
	\label{tab:snr_ogs1}
}
\centering
\begin{tabular}{@{}lrrrrr@{}}
\toprule
	 & \multicolumn{5}{c}{Estimator} \\ 
	 \cmidrule(l){2-6}
	Param.   & Hard thr.    	& Soft thr.	&	OGS[abs] &	OGS[log]	&	OGS[atan]
\\
 \midrule
max SNR     & 13.84   	& 12.17	&	12.30 &	14.52	&	15.37  
\\
$10^{-2}\sigma$   &  6.74     	& 3.86	&	8.01 &	 12.07	&	13.92   
\\
$10^{-3}\sigma$   &  5.05     	& 2.17	&	6.23 &	9.69	&	11.54   
\\
\bottomrule
\\[-0.5em]
\multicolumn{6}{@{}l}{SNR is in dB; $\sigma$ is the noise standard deviation.}

\end{tabular}
\end{table}

The result obtained using the prior version of OGS \cite{Chen_2014_OGS}  is shown in Fig~\ref{fig:1Dsignals}c.
This is equivalent to setting $\reg$ to the absolute value function; i.e. $\reg(x) = \abs{x}$.
So, we denote this as OGS[abs].
The result using the proposed non-convex regularized OGS is shown in Fig.~\ref{fig:1Dsignals}d.
We use the arctangent penalty function with $ a $ set to the maximum value of $1/(K\lam)$ that preserves convexity of $ F $;
i.e., we use $\reg(\cdot) = \reg\satan(\cdot, 1/(K\lam))$.
We denote this as OGS[atan].
We also used the logarithmic penalty (not shown in the figure).
For each version of OGS, we used a group size of $ K = 5 $, and we set $\lam$ to maximize the SNR.

Comparing soft thresholding and OGS[abs] (both of which are based on convex regularization),
it can be observed that OGS[abs] gives a higher SNR, but only marginally. 
Both methods leave residual noise,
as can be observed for OGS[abs] in Fig.~\ref{fig:1Dsignals}c.
On the other hand,
comparing OGS[atan] and OGS[abs], it can be observed that OGS[atan] (based on non-convex regularization)
is substantially superior: it has a substantially higher SNR and almost no residual noise is visible 
in the denoised signal.
Comparing OGS[log] and OGS[atan] with hard thresholding (see Table~\ref{tab:snr_ogs1}),
it can be observed the new non-convex regularized OGS algorithm also yields higher SNR than hard thresholding.
This example demonstrates the effectiveness of non-convex regularization for
promoting group sparsity.

To more clearly compare the result of OGS[abs] and OGS[atan], 
these two results are shown together in Fig.~\ref{fig:1Dio}.
In Fig.~\ref{fig:1Dio}a,
the output value, $x(i)$, is shown versus the input value, $y(i)$, for $ i \in \ZZ_N$.
Compared to OGS[abs], the OGS[atan] algorithm better preserves the amplitude of the non-zero values
of the original signal, 
while better thresholding small values.
Figure~\ref{fig:1Dio}b shows the denoising error for the two OGS methods.
It can be observed that the denoised signal produced by OGS[atan] has much less error than OGS[abs].
(For OGS[atan], the error is essentially zero for 50\% of the signal values.)

As a second experiment,  we selected $ T $ and $ \lam $ for each method, so as to reduce
the noise standard deviation, $ \sigma $, down to $0.01 \sigma$,
as described in Sec.~\ref{sec:setlam}.
The resulting SNRs, given in the second row of Table~\ref{tab:snr_ogs1}, are much lower.
(This method does not maximize SNR, but it does ensure residual noise
is reduced to the specified level.) 
The low SNR in these cases is due to the attenuation (bias) of large magnitude values. 
However, it can be observed that OGS, especially with non-convex regularization, 
significantly outperforms scalar thresholding.

\begin{figure}
	\centering
	\includegraphics{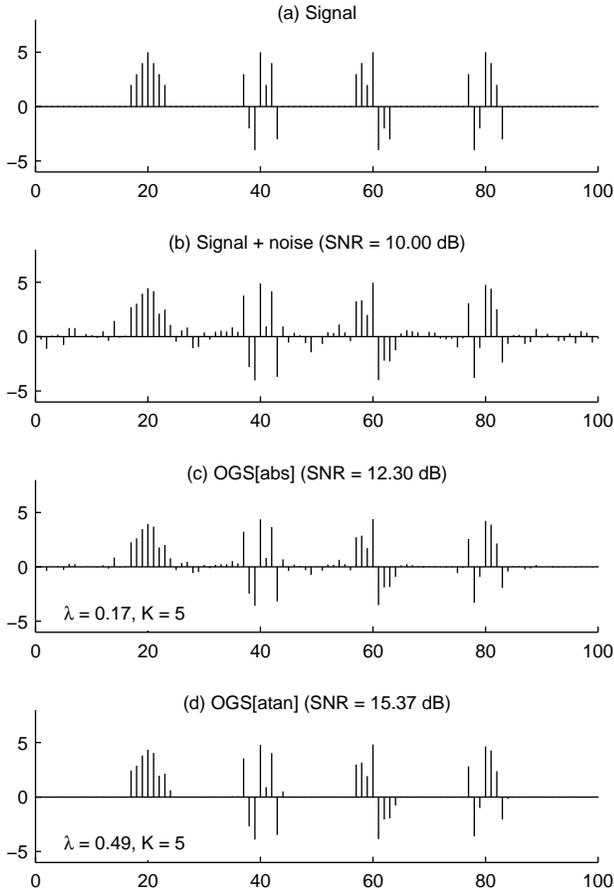}
	\caption{
	Example 1: Group-sparse signal denoising.
	\label{fig:1Dsignals}
	}
\end{figure}

\begin{figure}
	\centering
	\includegraphics{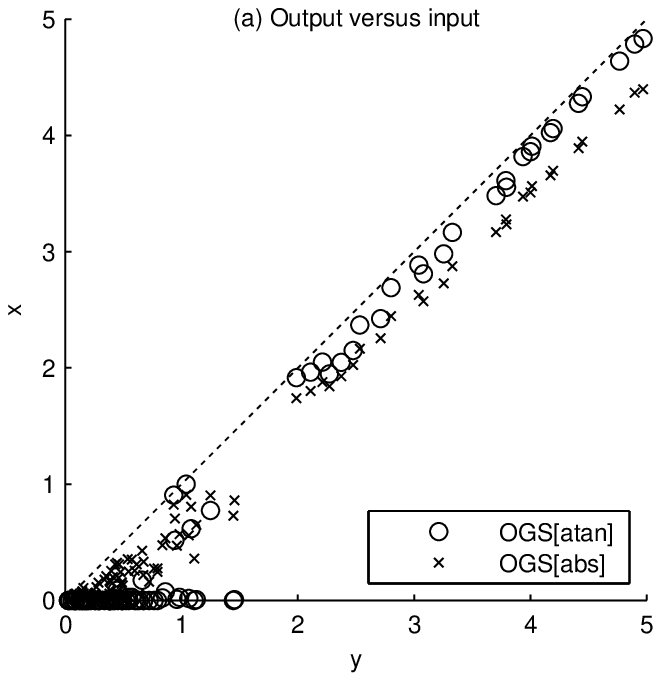}
	\includegraphics{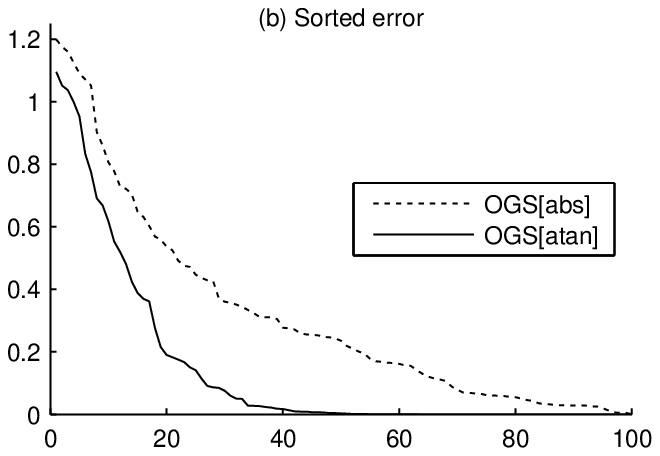}
	\caption{
	Example 1.
	Comparison of OGS[abs] and OGS[atan] in Fig.~\ref{fig:1Dsignals}.
	\label{fig:1Dio}
	}
\end{figure}

\subsection{Example 2: Speech Denoising}

This example evaluates the use of the proposed OGS algorithm for the problem of speech enhancement (denoising).
We compare the OGS algorithm with several other algorithms.
For the evaluation, we use female and male speakers,
multiple sentences, 
two noise levels, and two sampling rates.

Let $ \s = \{s(n), \ n \in \ZZ_N\} $ denote the noisy speech waveform
and $ \y = \{y(i), \ i \in \ZZ_{N_1} \times \ZZ_{N_2} \} = \STFT\{\s\} $ denote the complex-valued short-time Fourier transform of $ \s $.
For speech enhancement, 
we apply the two-dimensional form of the OGS algorithm to $ \y $ and then compute the inverse STFT;
i.e.,
\[
	 \x = \STFT\inv\{ \ogs( \STFT\{\s\}; \lam, K, \reg ) \}
\]
with $ K = (K_1, K_2) $ where $ K_1 $ and $ K_2 $ are the spectral and temporal widths of the two-dimensional group.
We implement the STFT with 50\% frame overlap and a frame duration of 32 milliseconds
(e.g., 512 samples at sampling rate 16 kHz).

Throughout this example, 
we use the non-convex arctangent penalty function with $ a $ set to its maximum value of $ a= 1/(K_1 K_2 \lam) $.
In all cases, we use a fixed number of 25 iterations within the OGS algorithm.

Each sentence in the evaluation is spoken by both a male and a female speaker.
There are 15 sentences sampled at 8 kHz,
and 30 sentences sampled at 16 kHz.
The 8 kHz and 16 kHz signals were obtained from Ref.~\cite{Loizou_2007} and a Carnegie Mellon University (CMU) website, respectively.\footnote{The CMU files were  downloaded from \url{http://www.speech.cs.cmu.edu/cmu_arctic/cmu_us_bdl_arctic/wav} and  \url{http://www.speech.cs.cmu.edu/cmu_arctic/cmu_us_clb_arctic/wav}.
This evaluation used files \texttt{arctic\_a0001} - \texttt{arctic\_a0030}. } 
To simulate noisy speech, we added white Gaussian noise. 

The time-frequency spectrogram of a noisy speech signal (\texttt{arctic\_a0001}) with an SNR of 10 dB is
illustrated in Fig.~\ref{fig:spectrogram}a.
Figure~\ref{fig:spectrogram}b illustrates the result of OGS[atan] using group size $ K = (8, 2)$; 
i.e., eight spectral samples by two temporal samples.
It can be observed that noise is effectively suppressed while details are preserved.

Figure~\ref{fig:spectrogram_detail_ogs} compares the proposed OGS[atan] algorithm
with the prior version of OGS \cite{Chen_2014_OGS}, i.e., OGS[abs].
The figure shows a single frame of the denoised spectrograms, corresponding to $ t = 0.79 $ seconds.
The prior and proposed OGS algorithms are illustrated in parts (a) and (b) respectively.
In both (a) and (b), samples of the noise-free spectrogram, to be recovered, are indicated by dots.
(The noisy spectrogram is not illustrated).
Comparing (a) and (b), it can be observed that above 2~kHz, OGS[atan] estimates the noise-free spectrum 
more accurately than OGS[abs].

In terms of run-time, for a signal of length $ N = 51761 $ (i.e., 3.2 seconds at sampling rate of 16 kHz), 
algorithms OGS[abs] and OGS[atan] ran in 0.18 and 0.22 seconds, respectively.
Timings were performed on a 2013 MacBook Pro (2.5 GHz Intel Core i5) running Matlab R2011a.

\smallskip
\noindent
\textbf{Regularization parameter.}
We have found empirically, that setting $\lam$ to maximize SNR
yields speech with noticeable undesirable perceptual artifacts (`musical noise').
This known phenomenon is due to residual noise in the STFT domain.
Therefore, we instead set the regularization parameter, $\lam$, using the noise suppression approach described in Sec.~\ref{sec:setlam}.
In particular, we set $ \lam $ so as to reduce the noise standard deviation $ \sigma $ down to $ (3 \times 10^{-4}) \sigma $.
We have selected this value so as to optimize the perceptual quality of the denoised speech
according to informal listening tests.
In particular, this value is effective at suppressing the `musical noise' artifact.
We also note that this approach leads to greater regularization (higher $ \lam $) than SNR-optimization of $\lam$.

\begin{figure}
	\centering
	\includegraphics{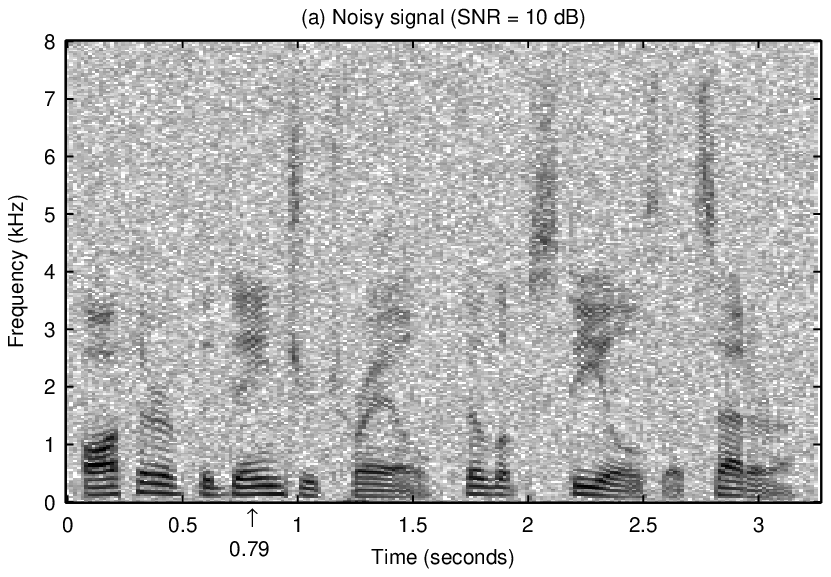}
	\includegraphics{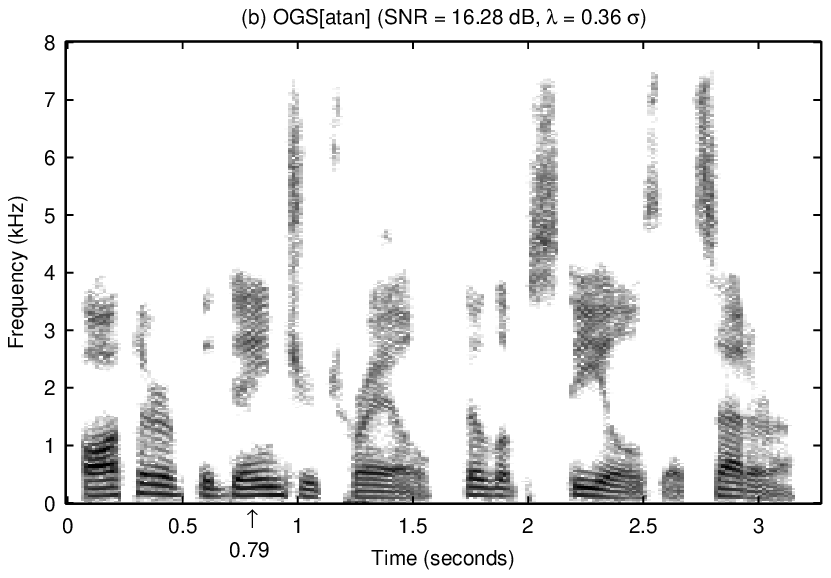}
	\caption{
	Spectrograms before and after denoising (male speaker).
	(a) Noisy signal.
	(b)  OGS[atan] with group size $K = (8, 2)$.
	Gray scale represents decibels.
	\label{fig:spectrogram}
	}
\end{figure}

\begin{figure}
\centering
\includegraphics{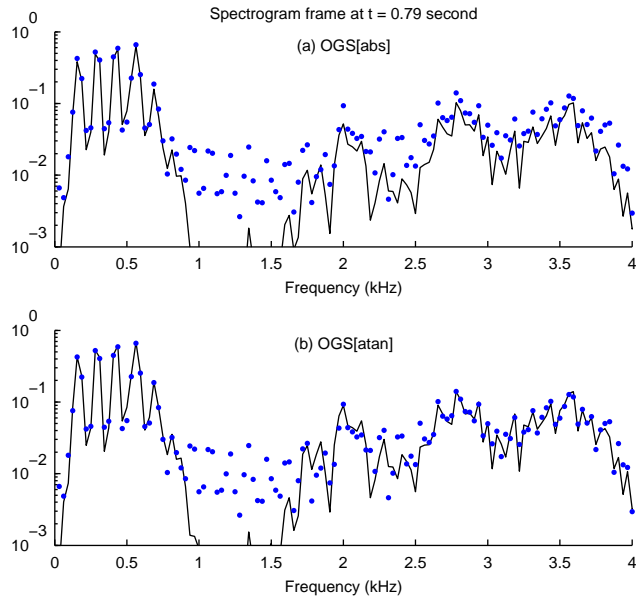}
\caption{
	Frequency spectrum of denoised spectrograms at $t = 0.79$ seconds.
	(a) OGS[abs]. (b) OGS[atan].
	The group size is $K = (8, 2)$ in both cases.
	The noise-free spectrum is indicated by dots.
	\label{fig:spectrogram_detail_ogs}
	}
\end{figure}

\smallskip
\noindent\textbf{Group size.} 
The perceptual quality of speech denoised using OGS depends on the specified group size.
As we apply OGS to a time-frequency spectrogram, 
the size of the group with respect to both the temporal and spectral dimensions 
must be specified. 
We let 
$K_1$ and $ K_2$ denote the number of spectral and temporal samples, respectively.

One approach to select the pair of parameters, $ (K_1, K_2) $,   
is to maximize the SNR for a set of denoising experiments.
We have performed OGS denoising for each of 30 noisy speech signals
using all pairs $ (K_1, K_2) $ such that
 $ 1 \leq K_1 \leq 10 $ and $ 1 \leq K_2 \leq 4 $.
In this experiment, we have used
speech sampled at 16 kHz, 
an SNR of 10 dB, and we have selected $\lam$ in each case
according to the preceding note [suppression of noise down to $(3\times10^{-4})\sigma$].
We found that for the male speaker, 
a group size of $(8, 2)$ maximized the SNR most frequently.
This conforms with our informal listening tests with different group sizes.
The denoised spectrum in Figure~\ref{fig:spectrogram}b was obtained using this group size of $(8, 2)$.

For the female speaker, the experiment reveals that a group size of $ (2, 4) $ maximizes the SNR most frequently.
However, we found that this group size results in poor perceptual quality.
To investigate the effect of group size, the denoised spectrograms using groups of size $(8,2)$ and $(2, 4)$
are illustrated in Fig.~\ref{fig:spectrogram_female}.
Fig.~\ref{fig:spectrogram_female}a shows the noisy spectrogram (file \texttt{arctic\_a0001}).
We highlight two areas of the spectrogram. 
The low-frequency area, denoted `A', exhibits a high level of temporal correlation.
On the other hand,
the high-frequency area, denoted `B', exhibits a high level of spectral correlation.
Figs.~\ref{fig:spectrogram_female}(b,c) show areas A and B of the spectrogram obtained using group size $(8, 2)$.
Figs.~\ref{fig:spectrogram_female}(d,e) show areas A and B of the spectrogram obtained using group size $(2, 4)$.

\begin{figure}
\centering
\includegraphics{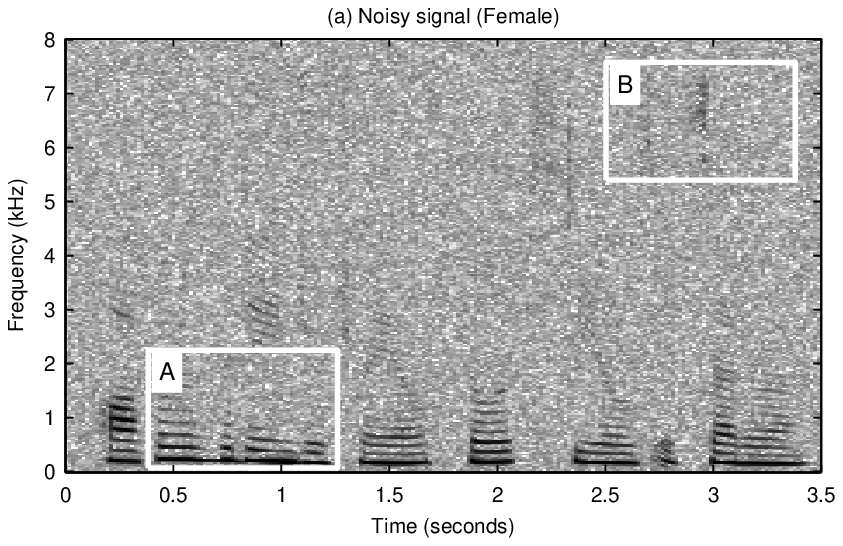}
\includegraphics{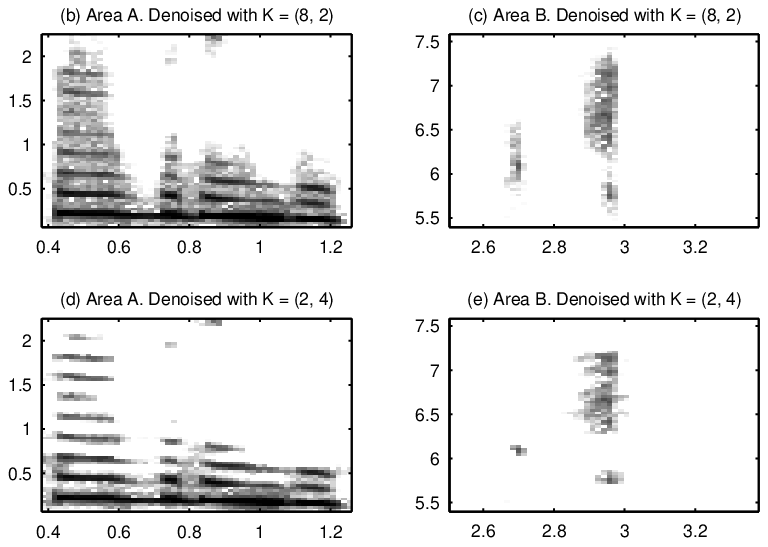}
\caption{
	Denoised spectrograms; female speaker. 
	(a) Noisy spectrogram with SNR = 10 dB.
	(b, c) Areas A and B, denoised with group size $ (8, 2)$.
	(d, e) Areas A and B, denoised with group size $ (2, 4)$.
	\label{fig:spectrogram_female}
	}
\end{figure}

It can be observed in area A that group size $(2, 4)$ suppresses the inter-formant noise more
completely than group size $(8, 2)$.
Conversely, in area B, group size $(8, 2)$ recovers the original spectrogram more accurately than group size $(2,4)$.
Since area A is representative of more of the spectrogram than area B, 
the SNR-optimal group size for the whole spectrogram is $(2, 4)$.
However, due to the distortion of high frequencies, as in area B, group size $(2, 4)$ yields 
the perceptually inferior result.
Moreover, the lower inter-formant noise suppression of group size $(8, 2)$ appears to have a negligible adverse impact
on perceptual quality.
Therefore, even though group size $(2,4)$ yields a higher SNR for the female speaker, 
we use group size $(8,2)$ in the evaluation of OGS due to its superior perceptual quality.
This points to the potential value of allowing groups in OGS to be sized adaptively, as in Ref.~\cite{Yu_2008_TSP}.
However, we do not explore such an extension of OGS in this work.

We conducted equivalent evaluations at the sampling rate of 8 kHZ in order to determine
an appropriate group size for this case.
We found that 
group sizes of  $K = (7, 2)$ and $K = (3, 3)$ were optimal in terms of SNR, for the male and female speaker, respectively.
As above, we selected the group size $K = (7, 2)$ for both genders for its better perceptual quality.

\smallskip
\noindent
\textbf{Algorithm comparisons.}
In Table~\ref{table:sealgs} we compare the OGS[atan] algorithm with several other speech enhancement algorithms.
The table summarizes the output SNR for 
two sampling rates, male and female speakers, and two input SNR (noise) levels.
Each SNR value is averaged over 30 or 15 sentences,
depending on the sampling rate.
It can be observed that the proposed algorithm, OGS[atan], achieves the highest SNR in each case.
(We also note that in all cases, OGS is used not with SNR-optimized $\lam$, 
but with the larger $\lam$, set according to the noise suppression method. 
The SNR of OGS could be further increased, but at the cost of perceptual quality.)

The algorithms used in the comparison are:
spectral subtraction (SS) \cite{Berouti_1979},
the log-MMSE algorithm (LMA)  \cite{Cohen2002},
the subspace algorithm (SUB) \cite{Hu2003},
block thresholding (BT) \cite{Yu_2008_TSP},
and persistent shrinkage (PS) \cite{Siedenburg_2013_JAES}.
For SS, LMA, and SUB, we used the MATLAB software provided in Ref.~\cite{Loizou_2007}.
For the BT\footnote{\url{http://www.cmap.polytechnique.fr/~yu/research/ABT/samples.html}}
and PS\footnote{\url{http://homepage.univie.ac.at/monika.doerfler/StrucAudio.html}}
algorithms,
we used the software provided by the authors on their web pages.

Furthermore, we additionally evaluated each method with empirical Wiener post-processing (EWP) \cite{Ghael97improvedwavelet}.
The EWP technique is based on mean square error minimization
and 
its effectiveness has been well demonstrated \cite{Dabov_2007_TIP,Yu_2008_TSP,Chen_2014_OGS}.
In Table~\ref{table:sealgs}, SNR values obtained using EWP are shown in parenthesis
for each algorithm and scenario. 

\begin{table}
	\centering
	\caption{
	\label{table:sealgs}
	Average SNR for six speech enhancement algorithms.
	} 
	\begin{tabular}{@{}lcccc@{}}
	    \multicolumn{5}{c}{ (a) $f_s = 16$ kHz (average of 30 samples) \vspace*{0.3em} }
	    \\
		\toprule
		 & \multicolumn{2}{c}{Male / Input SNR (dB)} & \multicolumn{2}{c}{Female / Input SNR (dB)}  \\
		    \cmidrule(lr){2-3}
		    \cmidrule(l){4-5}
		Method                                         &    5     &    10                 & 5       & 10              
		\\
		\midrule
		SS     &  9.44\,(10.96)     & 13.63\,(14.99)  & 13.36\,(14.59)     & 16.86\,(17.93)   
		\\
		LMA           & 10.24\,(11.64)   & 13.30\,(15.25)   & 13.30\,(15.16)   & 15.71\,(18.13)  
		\\
		SUB          & 11.28\,(12.31)   & 13.94\,(16.11) & 13.39\,(15.31)   & 15.05\,(18.48)
		\\
		BT          & 12.00\,(12.49)   & 15.61\,(16.10)    & 15.09\,(15.69)   & 18.18\,(18.78)     
		\\
		PS          & 10.75\,(12.00)   & 14.17\,(15.73)  & 12.67\,(14.71)   & 16.39\,(18.13)  
		\\
		OGS[abs]         & 10.48\,(12.36)   & 13.92\,(16.00)     & 12.91\,(15.53)   & 16.24\,(18.60)      
		\\
		OGS[atan]                             & 12.93\,(12.98)   & 16.58\,(16.58)  & 15.37\,(15.83)   & 18.68\,(19.02)
		\\
		\bottomrule
		\\
	\end{tabular}
	
	\begin{tabular}{@{}lcccc@{}}
	    \multicolumn{5}{c}{ (b) $f_s = 8$ kHz (average of 15 samples)\vspace*{0.3em}  }
	    \\
		\toprule
		 & \multicolumn{2}{c}{Male / Input SNR (dB)} & \multicolumn{2}{c}{Female / Input SNR (dB)}  \\
		    \cmidrule(lr){2-3}
		    \cmidrule(l){4-5}
		Method       &    5             &    10            & 5                & 10              
		\\
		\midrule
		SS           & 10.73\,(11.75)   & 14.57\,(15.54)   & 10.45\,(11.59)   & 14.38\,(15.47)   
		\\
		LMA          & 10.66\,(12.00)   & 13.75\,(15.61)   &  9.34\,(11.05)   & 12.51\,(14.85)  
		\\
		SUB          & 10.83\,(12.29)   & 14.03\,(16.06)   &  9.57\,(11.53)   & 13.25\,(15.55)
		\\
		BT           & 11.80\,(12.48)   & 15.45\,(16.10)   & 11.54\,(12.40)   & 15.12\,(16.00)     
		\\
		PS           & 10.45\,(12.20)   & 13.64\,(15.75)   &  9.11\,(11.20)   & 13.52\,(15.47)  
		\\
		OGS[abs]          &  9.96\,(12.25)   & 13.42\,(15.87)   &  9.34\,(11.91)   & 12.81\,(15.70)      
		\\
		OGS[atan]     & 12.80\,(12.97)   & 16.41\,(16.53)   & 12.10\,(12.62)   & 15.84\,(16.31)
		\\
		\bottomrule
	\end{tabular}
\end{table}

The proposed algorithm, OGS[atan], achieves the highest SNR for both noise levels and genders.
For example, for the male speaker with an input SNR of 10 dB, OGS[atan] attains the
highest output SNR of 16.58 dB.
BT achieves the second highest, 15.61 dB.
In terms of perceptual quality,
SS and LMA have clearly audible artifacts;
BT and PS have slight audible artifacts;
OGS[atan], OGS and SUB have the least audible artifacts.
However, SUB has a high computational complexity due to eigenvalue factorization.
Compared to OGS[abs] and SUB, OGS[atan] better preserves the perceptual quality of high frequencies.
Similar results can be observed for different noise levels and the female speaker.

Empirical Wiener post-processing (EWP) improves the SNR for all methods at all noise levels,
but least for OGS[atan].
EWP is effective for increasing SNR because it effectively rescales large STFT coefficients that
are unnecessarily attenuated by these algorithms (the results of which are biased toward zero).
The fact that EWP yields the least improvement for OGS[atan] demonstrates that
this algorithm inherently induces less bias than the other algorithms.
 
According to informal listening tests (conducted at input SNR of 10 dB, $f_s $ of 16 kHz), the effect of EWP on
audible artifacts depends on the algorithm.
Although EWP improves the SNR of SS and LMA, denoising artifacts are still clearly perceptible.
EWP improves the perceptual quality of BT and PS slightly.
EWP also improves perceptual quality of OGS[abs] and SUB, which already had good perceptual quality.
The effect of EWP on OGS[atan] is almost imperceptible; its good perceptual quality is maintained.

Figure~\ref{fig:AllSample_snr_male_fs16k} illustrates the individual SNRs of the 30 sentences
denoised using each of the utilized algorithms
(male, input SNR of 10 dB, $f_s$ of 16 kHz).
It can be observed that EWP improves each algorithm, except OGS[atan].
However, as shown in Fig.~\ref{fig:AllSample_snr_male_fs16k}b, 
OGS[atan] outperforms the other algorithms in terms of SNR irrespective of EWP.

\begin{figure}
	\centering
	\includegraphics[width = 3.5in]{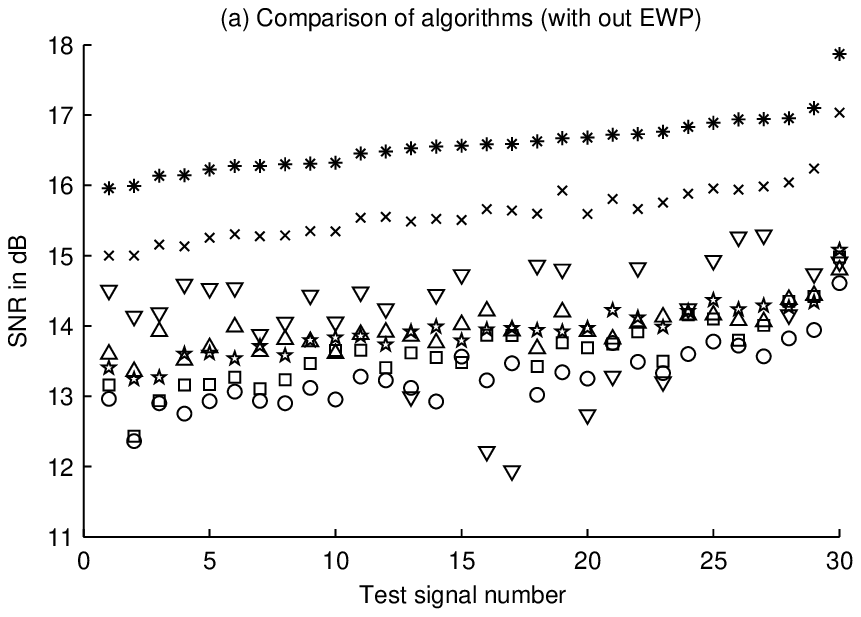}
	\includegraphics[width = 3.5in]{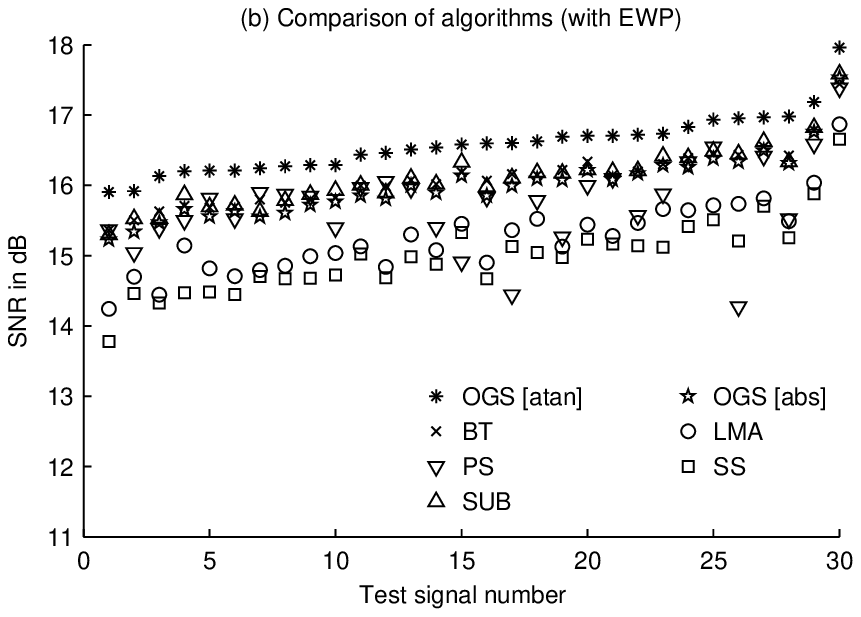}
	\caption{
	SNR comparison of speech enhancement algorithms (30 male sentences, input SNR of 10 dB).
	Each algorithm is used without EWP (a) and with EWP (b).
	The sentences are ordered according the SNR of OGS[atan].
	\label{fig:AllSample_snr_male_fs16k}
	}
\end{figure}

\section{Remarks}

Several aspects of the non-convex regularized OGS algorithm are sufficiently similar to 
those of the convex regularized OGS algorithm~\cite{Chen_2014_OGS} that we refer the 
reader to Ref.~\cite{Chen_2014_OGS}.
In particular, remarks in Ref.~\cite{Chen_2014_OGS} regarding the convergence behavior,
implementation issues, computational complexity, and relationship of OGS to FOCUSS \cite{Rao_2003_TSP}, apply also to the version of OGS presented here.

The proximal framework has proven effective for 
convex optimization 
problems arising in sparse signal estimation and reconstruction  \cite{Combettes_2011_chap, Combettes_2008_SIAM}.
The proposed non-convex regularized OGS algorithm resembles a proximity operator;
however, a proximity operator is defined in terms of a convex penalty function \cite{Combettes_2011_chap}.
Hence, the proposed approach appears to fall outside the proximal framework.
Due to the effectiveness of the proximal framework for solving inverse problems
much more general than denoising (e.g.\ deconvolution),
it will be of interest in future work
to explore the extent to which the proposed
method can be used for more general inverse problems by using proximal-like techniques.

\section{Conclusion}

This paper formulates group-sparse signal denoising as a convex optimization problem with a non-convex regularization term.
The regularizer is based on overlapping groups so as to promote group-sparsity.
The regularizer, being concave on the positive real line, promotes sparsity more strongly than any convex regularizer can.
For several non-convex penalty functions, parameterized by a variable, $a$, it has been shown how to constrain $ a $ to 
ensure the optimization problem is strictly convex.
Numerical experiments demonstrate the effectiveness of the proposed method for speech enhancement.

\appendix

The proof of Proposition \ref{prop:cvxcond}
relies on the following theorem and corollary.
\begin{thm}
\label{thm:convexity_book}
(Theorem 6.4, page 16, Ref.~\cite{Hiriart_2001})
Let a function $f$ be continuous on an open interval $I$ and possess an increasing right-derivative, or an increasing left-derivative, on $I$.
Then $f$ is convex on $I$.
\end{thm}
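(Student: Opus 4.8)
The plan is to prove convexity through the classical slope-monotonicity (three-chord) characterization, handling the increasing-right-derivative hypothesis explicitly; the increasing-left-derivative case then follows by the symmetric argument (or by applying the result to $x \mapsto f(-x)$). Throughout I write $f'(x^+)$ for the right-derivative and use that, by hypothesis, $x \mapsto f'(x^+)$ is nondecreasing on $I$.

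First I would establish an auxiliary one-sided mean-value inequality: \emph{if $h$ is continuous on a closed subinterval $[u,v]\subset I$ and $h'(x^+)\geq 0$ for every $x\in[u,v)$, then $h$ is nondecreasing on $[u,v]$.} It suffices to prove the endpoint statement $h(v)\geq h(u)$ and then apply it on every sub-subinterval. For the endpoint statement, fix $\epsilon>0$ and let $c=\sup\{\,x\in[u,v]: h(x)\geq h(u)-\epsilon(x-u)\,\}$; continuity of $h$ puts $c$ in this set, and if $c<v$ the inequality $h'(c^+)\geq 0>-\epsilon$ lets me extend the set slightly past $c$, contradicting maximality. Hence $c=v$, so $h(v)\geq h(u)-\epsilon(v-u)$, and letting $\epsilon\downarrow 0$ gives $h(v)\geq h(u)$. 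This lemma is the technical heart of the argument — the point being that a one-sided derivative need not be a genuine derivative, so the ordinary mean value theorem is unavailable and the $\epsilon$-supremum device is what makes the step go through.

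Next I would use this lemma to sandwich chord slopes. Given $a<b$ in $I$, set $m:=\bigl(f(b)-f(a)\bigr)/(b-a)$ and $g(x):=f(x)-f(a)-m(x-a)$, so that $g(a)=g(b)=0$ and $g'(\cdot^+)=f'(\cdot^+)-m$ is again nondecreasing. If $g'(a^+)>0$, then $g'(x^+)\geq g'(a^+)>0$ on $[a,b)$; choosing $h>0$ small enough that $g(a+h)>0$ and applying the lemma on $[a+h,b]$ yields $0=g(b)\geq g(a+h)>0$, a contradiction, so $f'(a^+)\leq m$. Symmetrically, if $g'(b^+)<0$, then $g'(x^+)\leq g'(b^+)<0$ on $[a,b)$, so $-g$ is nondecreasing, hence $g$ is nonincreasing; with $g(a)=g(b)=0$ this forces $g\equiv 0$ on $[a,b]$ and hence $g'(x^+)=0$ there, contradicting $g'(x^+)<0$. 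Thus $f'(b^+)\geq m$ as well.

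Finally, for any $x_1<x_2<x_3$ in $I$, applying the second inequality to $(a,b)=(x_1,x_2)$ and the first to $(a,b)=(x_2,x_3)$ gives
\[
\frac{f(x_2)-f(x_1)}{x_2-x_1}\;\leq\;f'(x_2^+)\;\leq\;\frac{f(x_3)-f(x_2)}{x_3-x_2},
\]
which is the three-chord inequality. Since this holds for every such triple, $f$ is convex on $I$, which would complete the proof. The only nonroutine step is the auxiliary monotonicity lemma; the remainder is bookkeeping with the auxiliary affine shift $g$.
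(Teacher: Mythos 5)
The paper does not actually prove this theorem: it is imported verbatim from Hiriart-Urruty and Lemar\'echal (Theorem 6.4, cited in the appendix) and used only to derive Corollary~\ref{thm:convexity}. So there is no internal proof to compare against; judged on its own, your self-contained argument is correct and follows the classical route. The technical heart, as you say, is the auxiliary lemma that a continuous function with nonnegative right-derivative on $[u,v)$ is nondecreasing, and your $\epsilon$-supremum proof of it is sound: the set $\{x\in[u,v]: h(x)\geq h(u)-\epsilon(x-u)\}$ is closed by continuity, so its supremum $c$ belongs to it, and since the right-derivative exists at $c$ by hypothesis, $h'(c^+)\geq 0>-\epsilon$ lets you move past $c$ whenever $c<v$; this correctly replaces the mean value theorem, which is unavailable for one-sided derivatives. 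The chord-slope sandwich is also fine, with two small points you should make explicit: $g(a+h)>0$ for small $h>0$ follows from $g(a)=0$ and $g'(a^+)>0$ via the difference quotient, and in the second case the contradiction comes from $g\equiv 0$ forcing $g'(x^+)=0$ on $[a,b)$ against $g'(x^+)\leq g'(b^+)<0$. The three-chord inequality for all triples is indeed equivalent to convexity, and the left-derivative case reduces correctly to the right-derivative case via $x\mapsto f(-x)$. What your proof buys relative to the paper is self-containedness (the appendix no longer leans on an external reference for this step), at the cost of roughly a page of standard real analysis; the paper's choice to cite the textbook is the more economical one, but nothing in your argument is circular or dependent on convexity machinery, so it is a legitimate elementary substitute.
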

Note that  $f$ is \emph{strictly} convex if $f$ has either a monotone increasing right-derivative, or a monotone increasing left-derivative, on $I$.

\begin{cor}
\label{thm:convexity}
Suppose $G:\RR\rightarrow\RR$ is  continuous, and the second derivative of $G$ exists satisfying $G''(x)> 0$ on $\RNZ$.
If $G'(0^-) < G'(0^+)$, then $G$ is strictly convex on $\RR$.
\end{cor}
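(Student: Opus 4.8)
The plan is to apply Theorem~\ref{thm:convexity_book} on the open interval $I = \RR$, using the right-sided derivative of $G$ throughout. First I would check that the right-derivative $D^+G(x)$ exists at every $x\in\RR$. For $x\neq 0$ this is immediate, since $G$ is twice differentiable there, and $D^+G(x)=G'(x)$. At $x=0$, note that $G'$ is monotone increasing on each of $(-\infty,0)$ and $(0,\infty)$ because $G''>0$ there, so the one-sided limits $L^\mp := \lim_{x\to 0^\mp} G'(x)$ exist in $[-\infty,+\infty]$; a mean-value argument using the continuity of $G$ at $0$ shows these limits coincide with the one-sided derivatives $G'(0^\mp)$ of $G$ at $0$, which by hypothesis are finite reals satisfying $G'(0^-)<G'(0^+)$. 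Hence $D^+G$ is a well-defined real-valued function on $\RR$, equal to $G'$ away from the origin and to $G'(0^+)$ at the origin.

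Next I would show $D^+G$ is strictly (monotone) increasing on $\RR$ by a short case analysis on $x_1<x_2$. If $x_1$ and $x_2$ lie in the same open half-line, strict monotonicity follows from $G''>0$ there. If $x_1<0=x_2$, then $D^+G(x_1)=G'(x_1)<G'(0^-)<G'(0^+)=D^+G(0)$, the first inequality from $G''>0$ on $(x_1,0)$ and the second being the gap hypothesis. The cases $x_1=0<x_2$ and $x_1<0<x_2$ are handled identically, chaining $G'(x_1)<G'(0^-)<G'(0^+)<G'(x_2)$ as needed. In every case $D^+G(x_1)<D^+G(x_2)$.

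Having established that $G$ is continuous on the open interval $\RR$ and possesses a monotone increasing right-derivative there, I would conclude by the strict version of Theorem~\ref{thm:convexity_book} (the remark immediately following it) that $G$ is strictly convex on $\RR$, which is the claim.

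I expect the only delicate point to be the bookkeeping around the one-sided derivatives at the origin: specifically, justifying that the symbols $G'(0^-)$ and $G'(0^+)$, read as one-sided derivatives of $G$, agree with the one-sided limits of $G'$, so that the monotonicity of $G'$ on the punctured line can be transferred across $0$ by the gap condition. Everything else is routine.
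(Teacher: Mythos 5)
Your proposal is correct and follows essentially the same route as the paper: show the right-derivative of $G$ is (strictly) monotone increasing on all of $\RR$ by using $G''>0$ on each half-line and chaining $G'(x_1) < G'(0^-) < G'(0^+) < G'(x_2)$ across the origin, then invoke the strict version of Theorem~\ref{thm:convexity_book}. Your extra care in identifying the one-sided limits of $G'$ with the one-sided derivatives $G'(0^\mp)$ at the origin is a welcome refinement of a step the paper leaves implicit, but it does not change the argument.
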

\begin{proof}
Based on Proposition~\ref{thm:convexity_book}, it is sufficient to prove that the right derivative of $G$ is monotone increasing on $\RR$.
For all $x < 0$, since $G''(x) > 0$,
we have $G'(x) = G'(x^+) = G'(x^-)$ is monotone increasing.
We also have $G'(x^+)$ is monotone increasing for $x>0$.
For any $x_1 < 0$ and $x_2>0$, we have $G'(x_1^+) = G'(x_1^-) <  G'(0^-) < G'(0^+) < G'(x_2^+)$.
If follows that $G'(x^+)$ is monotone increasing on $\RR$, and hence $G$ is strictly convex.
\end{proof}

\bibliographystyle{plain}

%
%
%

\end{document}